\documentclass{article}

\PassOptionsToPackage{numbers, compress}{natbib}

\usepackage[final]{}

\usepackage[utf8]{inputenc} 
\usepackage[T1]{fontenc}    
\usepackage{hyperref}       
\usepackage{url}            
\usepackage{booktabs}       
\usepackage{amsfonts}       
\usepackage{nicefrac}       
\usepackage{microtype}      
\usepackage{pgf}

\usepackage{algorithm}
\usepackage{algorithmic}
\usepackage{subcaption}

\title{Certifiable Robustness to Graph Perturbations}
\author{%
	Aleksandar Bojchevski \\
	Technical University of Munich\\
	\texttt{a.bojchevski@in.tum.de} \\
	\And
	Stephan G\"unnemann \\
	Technical University of Munich \\
	\texttt{guennemann@in.tum.de} \\
}

\usepackage{graphicx}


\usepackage{amsmath,amsfonts,bm}









\def\eqref#1{equation~\ref{#1}}









\def\1{\bm{1}}








\def\ve{{\bm{e}}}

\def\vm{{\bm{m}}}

\def\vr{{\bm{r}}}

\def\vx{{\bm{x}}}
\def\vy{{\bm{y}}}
\def\vz{{\bm{z}}}


\def\mA{{\bm{A}}}

\def\mD{{\bm{D}}}

\def\mF{{\bm{F}}}

\def\mH{{\bm{H}}}
\def\mI{{\bm{I}}}

\def\mP{{\bm{P}}}

\def\mW{{\bm{W}}}
\def\mX{{\bm{X}}}
\def\mY{{\bm{Y}}}

\DeclareMathAlphabet{\mathsfit}{\encodingdefault}{\sfdefault}{m}{sl}
\SetMathAlphabet{\mathsfit}{bold}{\encodingdefault}{\sfdefault}{bx}{n}

\def\gA{{\mathcal{A}}}

\def\gC{{\mathcal{C}}}

\def\gE{{\mathcal{E}}}
\def\gF{{\mathcal{F}}}

\def\gL{{\mathcal{L}}}

\def\gP{{\mathcal{P}}}
\def\gQ{{\mathcal{Q}}}

\def\gS{{\mathcal{S}}}

\def\gV{{\mathcal{V}}}
\def\gW{{\mathcal{W}}}



\def\sZ{{\mathbb{Z}}}








\newcommand{\R}{\mathbb{R}}



\DeclareMathOperator*{\argmax}{arg\,max}

\def\ppr{{\boldsymbol{\pi}}}
\def\adm{{\gQ_{\gF}}}
\def\gt{{y}}
\newcommand{\xon}{x^{1}}
\newcommand{\xof}{x^{0}}
\newcommand{\bon}{\beta^{1}}
\newcommand{\bof}{\beta^{0}}

\newcommand{\att}[1]{\tilde{#1}}

\usepackage{amsthm}
\newtheorem{problem}{Problem}
\newtheorem{proposition}{Proposition}


\begin{document}
\maketitle

\begin{abstract}
Despite the exploding interest in graph neural networks there has been little effort to verify and improve their robustness. This is even more alarming given recent findings showing that they are extremely vulnerable to adversarial attacks on both the graph structure and the node attributes. We propose the first method for verifying certifiable (non-)robustness to graph perturbations for a general class of models that includes graph neural networks and label/feature propagation. By exploiting connections to PageRank and Markov decision processes our certificates can be efficiently (and under many threat models exactly) computed. Furthermore, we investigate robust training procedures that increase the number of certifiably robust nodes while maintaining or improving the clean predictive accuracy.
\end{abstract}

\section{Introduction}
As the number of machine learning models deployed in the real world grows, questions regarding their robustness become increasingly important. In particular, it is critical to assess their vulnerability to adversarial attacks -- deliberate perturbations of the data designed to achieve a specific (malicious) goal.
Graph-based models suffer from poor adversarial robustness \citep{dai2018Adversarial, zugner2018adversarial},
yet in domains where they are often deployed (e.g.\ the Web) \citep{ying2018graph}, adversaries are pervasive and attacks have a low cost \citep{castillo2010adversarial, hooi2016birdnest}. Even in scenarios where adversaries are not present such analysis is important since it allows us to reason about the behavior of our models in the worst case (i.e. treating nature as an adversary).

Here we focus on semi-supervised node classification -- given a single large (attributed) graph and the class labels of a few nodes the goal is to predict the labels of the remaining unlabelled nodes.
Graph Neural Networks (GNNs) have emerged as the de-facto way to tackle this task, significantly improving performance over the previous state-of-the-art.
They are used for various high impact applications across many domains such as:
protein interface prediction \citep{fout2017protein},
classification of scientific papers \citep{kipf2017semi}, 
fraud detection \citep{wang2019fdgars},
and breast cancer classification \citep{rhee2018hybrid}. Therefore, it is crucial to asses their sensitivity to adversaries and ensure they behave as expected.

However, despite their popularity there is scarcely any work on certifying or improving the robustness of GNNs.
As shown in \citet{zugner2018adversarial} node classification  with GNNs is not robust and can even be attacked on multiple fronts -- slight perturbations of either the node features or the graph structure can lead to wrong predictions.
Moreover, since we are dealing with non i.i.d.\ data by taking the graph structure into account, robustifying GNNs is more difficult compared to traditional models -- perturbing only a few edges affects the predictions for all nodes. What can we do to fortify GNNs and make sure  they produce reliable predictions in the presence of adversarial perturbations?

We propose the first method for provable robustness regarding perturbations of the graph structure. Our approach is applicable to a general family of models where the predictions are a linear function of (personalized) PageRank. This family includes GNNs \cite{klicpera2018combining} and other graph-based models such as label/feature propagation \cite{buchnik2018bootstrapped,zhou2007spectral}.
Specifically, we provide:
\textbf{1.\ Certificates}: Given a trained model and a general set of admissible graph perturbations we can 
efficiently verify whether a node is certifiably robust -- there exists no perturbation that can change its prediction.
We also provide non-robustness certificates via adversarial examples.
\textbf{2.\ Robust training}: We investigate robust training schemes based on our certificates and show that they improve both robustness and clean accuracy.
Our theoretical findings are empirically demonstrated and the code is provided for reproducibility\footnote{Code, data, and supplementary material available at \url{https://www.daml.in.tum.de/graph-cert}}.
Interestingly, in contrast to existing works on provable robustness \citep{hein2017formal, wong2018provable, zugner2019certifiable} that derive bounds (by relaxing the problem), we can efficiently compute exact certificates for some threat models.
\section{Related work}
Neural networks \citep{szegedy2014intriguing,goodfellow2015explaining}, and recently graph neural networks \citep{dai2018Adversarial,zugner2018adversarial, zugner2019adversarial} and node embeddings \citep{bojchevski2019adversarial} 
were shown to be highly sensitive to small adversarial perturbations.
There exist many (heuristic) approaches aimed at robustifying these models, however, they have only limited usefulness since there is always a new attack able to break them, leading to a cat-and-mouse game between attackers and defenders. 
A more promising line of research studies certifiable robustness \citep{hein2017formal, raghunathan2018semidefinite, wong2018provable}. Certificates provide guarantees that no perturbation regarding a specific threat model will change the prediction of an instance. So far there has been almost no work on certifying graph-based models.

Different heuristics have been explored in the literature to improve robustness of graph-based models: (virtual) adversarial training \citep{chen2019can,feng2019graph,sun2019virtual,xu2019topology}, trainable edge weights \citep{wu2019adversarial}, graph encoder refining and adversarial contrastive learning \citep{wang2019adversarial}, transfer learning \citep{tang2019robust}, smoothing distillation \citep{chen2019can}, decoupling structure from attributes \citep{miller2018improving}, measuring logit discrepancy \cite{zhang2019comparing}, allocating reliable queries \cite{zhou2019adversarial}, representing nodes as Gaussian distributions \citep{zhu2019robust}, and Bayesian graph neural networks \citep{zhang2018bayesian}. Other robustness aspects of graph-based models (e.g. noise or anomalies) have also been investigated \cite{bojchevski2018bayesian,bojchevski2017robust, hoang2019learning}. However, none of these works provide provable guarantees or certificates.

\citet{zugner2019certifiable} is the only work that proposes robustness certificates for graph neural networks (GNNs). However, their approach can handle perturbations only to the \emph{node attributes}. Our approach is completely orthogonal to theirs since we consider adversarial perturbations to the \emph{graph structure} instead. Furthermore, our certificates are also valid for other semi-supervised learning approaches such as label/feature propagation. 
Nonetheless, there is a critical need for both types of certificates given that GNNs are shown to be vulnerable to attacks on both the attributes and the structure. As future work, we aim to consider perturbations of the node features and the graph jointly.

\section{Background and preliminaries}
Let $G=(\gV, \gE)$ be an attributed graph with $N=|\gV|$ nodes and edge set $\gE \subseteq \gV \times \gV$. We denote with $\mA \in \{0, 1\}^{N \times N}$ the adjacency matrix and $\mX \in \R^{N \times D}$ the matrix of $D$-dimensional node features for each node.
Given a subset $\gV_L \subseteq \gV = \{1, \dots, N\}$ of labelled nodes the goal of semi-supervised node classification is to predict for each node $v \in \gV$ one class in $\gC=\{1, \dots, K\}$.
We focus on deriving (exact) robustness certificates for graph neural networks via optimizing personalized PageRank. We also show (Appendix \ref{sec:label_prob}) how to apply our approach for label/feature propagation \citep{buchnik2018bootstrapped}.

\textbf{Topic-sensitive PageRank.}
The topic-sensitive PageRank \citep{haveliwala2002topic,jeh2003scaling} vector $\ppr_G(\vz)$ for a graph $G$ and a probability distribution over nodes $\vz$
is defined as $\ppr_{G, \alpha}(\vz)=(1-\alpha)(\mI_N - \alpha \mD^{-1}\mA)^{-1}\vz$.
\footnote{In practice we do not invert the matrix, but rather we solve the associated sparse linear system of equations.}
Here $\mD$ is a diagonal matrix of node out-degrees with $\mD_{ii}=\sum_j \mA_{ij}$.
Intuitively, $\ppr(\vz)_u$ represent the probability of random walker on the graph to land at node $u$ when it follows edges at random with probability $\alpha$ and teleports back to the node $v$ with probability $(1-\alpha)\vz_v$. Thus, we have $\ppr(\vz)_u \ge 0$ and $\sum_u \ppr(\vz)_u = 1$. For $\vz=\ve_v$, the $v$-th canonical basis vector, we get the personalized PageRank vector for node $v$.
We drop the index on $G, \alpha$ and $\vz$ in $\ppr_{G, \alpha}(\vz)$ when they are clear from the context.

\textbf{Graph neural networks.}
As an instance of graph neural network (GNN) methods we consider an adaptation of the recently proposed PPNP approach \citep{klicpera2018combining} since it shows superior performance on the semi-supervised node classification task \cite{fey2019fast}. PPNP unlike message-passing GNNs decouples the feature transformation from the propagation. We have:
\begin{align}
\label{eq:ppnp_full}
\mY = \textrm{softmax} \big( \mathbf{\Pi}^\text{sym} \mH \big)
,\quad
\mH_{v,:} = f_\theta(\mX_{v, :})
,\quad
\mathbf{\Pi}^\text{sym} = (1-\alpha) (\mI_N - \alpha\mD^{-1/2}\mA\mD^{-1/2})^{-1}
\end{align}
where $\mI_N$ is the identity, $\mathbf{\Pi}^\text{sym} \in \R^{N\times N}$ is a symmetric propagation matrix, $\mH  \in \R^{N\times C}$ collects the individual per-node logits, and $\mY  \in \R^{N\times C}$ collects the final predictions after propagation. A neural network $f_\theta$
outputs the logits $\mH_{v,:}$ by processing the features $\mX_{v, :}$ of every node $v$ independently. Multiplying them with $\mathbf{\Pi}^\text{sym}$ we obtain the diffused logits $\mH^{\text{diff}}:=\mathbf{\Pi}^\text{sym}\mH$ which  implicitly incorporate the graph structure and avoid the expensive multi-hop message-passing procedure.

To make PPNP more amenable to theoretical analysis we replace $\mathbf{\Pi}^\text{sym}$ with the personalized PageRank matrix $\mathbf{\Pi}=(1-\alpha)(\mI_N - \alpha \mD^{-1}\mA)^{-1}$ which has a similar spectrum. Here each row $\mathbf{\Pi}_{v, :}=\ppr(\ve_v)$ equals to the personalized PageRank vector of node $v$. This model which we denote as $\ppr$-PPNP has similar prediction performance to PPNP.
We can see that the diffused logit after propagation for class $c$ of node $v$ is a linear function of its personalized PageRank score: $\mH^{\text{diff}}_{v, c}=\ppr(\ve_v)^T\mH_{:, c}$,
i.e. a weighted combination of the logits of all nodes for class $c$.
Similarly, the margin $m_{c_1, c_2}(v)=\mH^{\text{diff}}_{v, c_1}-\mH^{\text{diff}}_{v, c_2} = \ppr(\ve_v)^T(\mH_{:, c_1}-\mH_{:, c_2})$ defined as the difference in logits  for node $v$ for two given classes $c_1$ and $c_2$ is also linear in $\ppr(\ve_v)$. If $\min_c m_{\gt_v, c}(v) < 0$, where $\gt_v$ is the ground-truth label for $v$, the node is \emph{misclassified} since the prediction equals $\argmax_c \mH^{\text{diff}}_{v,c}$.

\section{Robustness certificates}
\label{sec:certificates}
\subsection{Threat model, fragile edges, global and local budget}
\label{sec:threat_model}
We investigate the scenario in which a subset of edges in a directed graph are "fragile", i.e.\ an attacker has control over them, or in general we are not certain whether these edges are present in the graph.
Formally, we are given
a set of fixed edges $\gE_f \subseteq \gE$ that cannot be modified (assumed to be reliable),
and set of fragile edges $\gF \subseteq (\gV \times \gV) \setminus \gE_f$. For each fragile edge $(i, j) \in \gF$ the attacker can decide whether to include it in the graph or exclude it from the graph, i.e.\ set $\mA_{ij}$ to $1$ or $0$ respectively.
For any subset of included $\gF_+ \subseteq \gF$ edges
we can form the perturbed graph $\att{G}=(\gV, \att\gE := \gE_f \cup \gF_+)$.
An \emph{excluded} fragile edge $(i,j)\in\gF\setminus\gF_+$ is a non-edge in $\att G$.
This formulation is general, since we can set $\gE_f$ and $\gF$ arbitrarily. For example, for our certificate scenario given an existing clean graph $G=(\gV, \gE)$ we can set $\gE_f = \gE$ and $\gF \subseteq \gV \times \gV$ which implies the attacker can only add new edges to obtain perturbed graphs $\att G$. Or we can set $\gE_f = \emptyset$ and $\gF = \gE$ so that the attacker can only remove edges, and so on.
There are $2^{\vert\gF\vert}$ (exponential) number of valid configurations leading to different perturbed graphs
which highlights that certificates are challenging for graph perturbations.

In reality, perturbing an edge is likely to incur some cost for the attacker. To capture this we introduce a \emph{global budget}.
The constraint $\vert \att \gE \setminus \gE \vert + \vert \gE \setminus \att \gE \vert \le B$ implies that the attacker can make at most $B$ perturbations. The first term equals to the number of newly added edges, and the second to the number of removed existing edges. Here, including an edge that already exists does not count towards the budget. 
This is only a design choice that depends on the application, and our method works in general.
Furthermore, perturbing many edges for a single node might not be desirable, thus we also allow to limit the number of perturbations locally. Let $\gE^v = \{ (v, j) \in \gE  \}$ be the set of edges that share the same source node $v$.
Then, the constraint $\vert \att \gE^v \setminus \gE^v \vert + \vert \gE^v \setminus \att \gE^v \vert \le b_v$ enforces a local budget  $b_v$  for the node $v$. By setting $b_v=|\gF^v|$ and $B=|\gF|$ we can model an unconstrained attacker.
Letting $\gP(\gF)$ be the power set of $\gF$, we  define the set of admissible perturbed graphs:
{\medmuskip=1mu
\begin{align}
	\label{eq:admissible}
	\adm =
	 \{	(\gV, \att \gE := \gE_f \cup \gF_+) \mid
		\gF_+ \in \gP(\gF),~ 
		\vert \att \gE \setminus \gE \vert + \vert \gE \setminus \att \gE \vert \le B,~
		\vert \att \gE^v \setminus \gE^v \vert + \vert \gE^v \setminus \att \gE^v \vert \le b_v, \forall v
	\}
\end{align}}
\subsection{Robustness certificates}
 \begin{problem}
	\label{prob:worst_margin}
	Given a graph $G$, a set of fixed $\gE_f$ and fragile $\gF$ edges, global $B$ and local $b_v$ budgets, target node $t$, and a model with logits $\mH$.
	Let $\gt_t$ denote the class of node $t$ (predicted or ground-truth).
	The worst-case margin between class $\gt_t$ and class $c$ under any admissible perturbation $\att{G} \in \adm$ is:
	\begin{align}
	\label{eq:worst_margin_per_class}
	m^*_{\gt_t, c}(t) = \min\nolimits_{\att{G} \in \adm} m_{\gt_t, c}(t)
	= \min\nolimits_{\att{G} \in \adm} \ppr_{\att{G}}(\ve_t)^T(\mH_{:, \gt_t} - \mH_{:, c})
	\end{align}
	If $m^*_{\gt_t, *}(t)= \min_{c \ne \gt_t} m^*_{\gt_t, c}(t)>0$,
	node $t$ is certifiably robust w.r.t.\ the logits $\mH$, and the set $\adm$.
\end{problem}
Our goal is to verify whether no admissible $\att{G} \in \adm$ can change the prediction for a target node $t$. From Problem \ref{prob:worst_margin} we see that if the worst margin over all classes $m^*_{\gt_t, *}(t)$ is positive, then $m^*_{\gt_t, c}(t) > 0$, for all $\gt_t \ne c$, which implies that there exists \emph{no} adversarial example within $\adm$ that leads to a change in the prediction to some other class $c$, that is,
the logit for the given class $\gt_t$ is always largest.

\textbf{Challenges and core idea.}
From a cursory look at Eq.\ \ref{eq:worst_margin_per_class} it appears that finding the minimum is intractable. After all, our domain is discrete and we are optimizing over exponentially many configurations. Moreover, the margin is a function of the personalized PageRank which has a non-trivial dependency on the perturbed graph.
But there is hope:
For a fixed $\mH$, the margin $m_{\gt_t, c}(t)$ is a  linear function of $\ppr(\ve_t)$.
Thus, Problem \ref{prob:worst_margin} reduces to optimizing a linear function of personalized PageRank over a specific constraint set.
This is the core idea of our approach. As we will show, if we consider only local budget constraints the exact certificate can be efficiently computed. This is in contrast to most certificates for neural networks that rely on different relaxations to make the problem tractable. 
Including the global budget constraint, however, makes the problem hard. For this case we derive an efficient to compute lower bound on the worst-case margin. Thus, if the lower bound is positive we can still guarantee that our classifier is robust w.r.t.\ the set of admissible perturbations.

\subsection{Optimizing topics-sensitive PageRank with global and local constraints}
\label{sec:opt_pagerank}
We are interested in optimizing a linear function of the topic-sensitive PageRank vector of a graph by modifying its structure. That is, we want to configure a set of fragile edges into included/excluded to obtain a perturbed graph $\att{G}$ maximizing the objective.
Formally, we study the general problem:
\begin{problem}
	\label{prob:pagerank}
	Given a graph $G$, a set of admissible perturbations $\adm$ as in Problem \ref{prob:worst_margin}, and any fixed $\vz,\vr \in \R^{N}, \alpha \in (0, 1)$ solve the following optimization problem:
	$\max\nolimits_{\att{G} \in \adm} \vr^T \ppr_{\att G, \alpha}(\vz)$.
\end{problem}
Setting $\vr=-(\mH_{:, \gt_t}-\mH_{:, c})$ and $\vz=\ve_t$, we see that Problem \ref{prob:worst_margin} is a special case of Problem \ref{prob:pagerank}.
We can think of $\vr$ as a reward/cost vector, i.e.\ $\vr_v$ is the reward that a random walker obtains when visiting node $v$.
The objective value $\vr^T \ppr(\vz)$ is proportional to the overall reward obtained during an infinite random walk with teleportation since $\ppr(\vz)_v$ exactly equals to the frequency of visits to $v$.

Variations and special cases of this problem have been previously studied \citep{avrachenkov2006effect, csaji2010pagerank, csaji2014pagerank,de2008maximizing, fercoq2013ergodic,hollanders2011policy,olsen2010maximizing}.
Notably, \citet{fercoq2013ergodic} cast the problem as an average cost infinite horizon Markov decision process (MDP), also called ergodic control problem, where each node corresponds to a state and the actions correspond to choosing a subset of included fragile edges, i.e.\ we have $2^{|\gF^v|}$ actions at each state $v$ (see also Fig.~\ref{fig:auxgraph_before}).
They show that despite the exponential number of actions, the problem can be efficiently solved in polynomial time, and they derive a value iteration algorithm with different local constraints.
They enforce that the final perturbed graph has at most $b_v$ total number of edges per node, while we enforce that at most $b_v$ edges per node are perturbed (see Sec.~\ref{sec:threat_model}).

\textbf{Our approach for local budget only.} Inspired by the MDP idea we derive a policy iteration (PI) algorithm which also runs in polynomial time \citep{hollanders2011policy}. Intuitively, every policy corresponds to a perturbed graph in $\adm$, and each iteration improves the policy. The PI algorithm allows us to: incorporate our local constraints easily, take advantage of efficient solvers for sparse systems of linear equations (line 3 in Alg.~\ref{alg:policy_iter}),
and implement the policy improvement step in parallel (lines 4-6 in Alg.~\ref{alg:policy_iter}). It can easily handle very large sets of fragile edges and it scales to large graphs. 
\begin{proposition}
	\label{prop:policy_iter}
	Algorithm \ref{alg:policy_iter} which greedily selects the fragile edges finds an optimal solution for Problem \ref{prob:pagerank} with only local constraints in a number of steps independent of the size of the graph.
\end{proposition}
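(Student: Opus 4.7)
The plan is to recognize Problem~\ref{prob:pagerank} (with local constraints only) as a discounted-reward Markov decision process (MDP) and to verify that Algorithm~\ref{alg:policy_iter} is exactly Howard's policy iteration applied to it. This gives both optimality and a strongly polynomial iteration count.

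\textbf{MDP reformulation.} First I would model the random walker defining $\ppr_{\att G,\alpha}(\vz)$ as an MDP with state space $\gV$, discount $\alpha$, one-step reward $\vr_v$ at state $v$, and a per-node action set $\gA_v$ consisting of all subsets $\gF_+^v\subseteq\gF^v$ whose symmetric difference with $\gE^v\cap\gF^v$ has size at most $b_v$ (so that the local budget is respected). Under action $\gF_+^v$, the next state is uniform over $\gE_f^v\cup\gF_+^v$, and with probability $1-\alpha$ the walker teleports to a node drawn from $\vz$. Expanding $\vr^T\ppr_{\att G,\alpha}(\vz)=(1-\alpha)\vr^T(\mI_N-\alpha\mD^{-1}\mA)^{-1}\vz$ as a geometric series shows this quantity equals the expected total $\alpha$-discounted reward under initial distribution $\vz$, so Problem~\ref{prob:pagerank} is the associated optimal control problem. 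Crucially, since there are no cross-node constraints, policies factorize as $\pi=(\gF_+^v)_{v\in\gV}\in\prod_v\gA_v$ in one-to-one correspondence with $\adm$.

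\textbf{Policy iteration as greedy edge selection.} For a fixed policy the value vector $\vV$ satisfies a linear Bellman equation that line~3 of Algorithm~\ref{alg:policy_iter} solves via a sparse solve. The policy-improvement step picks at each node $v$ an action maximizing the Bellman operator, which reduces to the single-node ratio problem
\[
\max_{\gF_+^v\in\gA_v}\;\frac{1}{|\gE_f^v|+|\gF_+^v|}\sum_{u\in\gE_f^v\cup\gF_+^v}\vV_u .
\]
I would solve this by sorting candidate targets by $\vV_u$ and sweeping a threshold on the denominator, which is exactly the greedy rule in lines~4--6 and decomposes trivially across $v$. Howard's monotonicity then gives a componentwise strict improvement of $\vV$ whenever the current policy is suboptimal; since $\adm$ is finite, the algorithm reaches a fixed point, which by the Bellman optimality equations is a maximizer of Problem~\ref{prob:pagerank}.

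\textbf{Iteration count.} For the ``independent of $N$'' claim I would invoke Ye-style strongly polynomial convergence results for policy iteration on discounted MDPs: the per-iteration contraction is governed only by the fixed discount $\alpha$, and the per-state action set $\gA_v$ has cardinality depending only on $|\gF^v|$ and $b_v$. Combined with the product structure of the policy space (so each node's improvement is analyzed in isolation), this yields a bound that is a function of $\alpha$ and of the local parameters $(|\gF^v|,b_v)$ rather than of $N$.

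\textbf{Main obstacle.} The delicate part is the $N$-free iteration bound: generic policy-iteration bounds scale with the number of states, and obtaining a truly graph-size-independent count requires leveraging both the product structure of $\adm$ and the uniform contraction provided by the fixed teleportation probability $1-\alpha$. The correctness of the greedy ratio subroutine under the mixed add/remove local budget also requires a short case analysis to check that flipping each fragile edge's default is correctly accounted for, but the ratio-maximization argument itself is standard.
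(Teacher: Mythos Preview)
Your proposal is correct and follows essentially the same route as the paper: cast Problem~\ref{prob:pagerank} as an MDP, identify Algorithm~\ref{alg:policy_iter} as policy iteration, and invoke known convergence bounds. The paper's proof is little more than a sequence of citations (Fercoq et al.\ for the MDP formulation, Hollanders et al.\ for the iteration bound).

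Two differences are worth noting. First, you use the \emph{discounted} formulation (discount $\alpha$, reward $\vr_v$, initial distribution $\vz$), whereas the paper follows Fercoq et al.\ and uses the \emph{average-cost} formulation. Both are legitimate for PageRank; your discounted reading is arguably closer to the code, since line~3 literally computes $(\mI-\alpha P)^{-1}\vr$, the discounted value function, and as you observe the optimal policy is then independent of $\vz$ because it maximizes $V$ pointwise. Second, for the ``independent of $N$'' iteration count the paper simply appeals to Hollanders et al., who prove a PageRank-specific bound: for fixed $\alpha$ policy iteration terminates in at most as many steps as value iteration needs, and the latter depends only on $\alpha$. Your plan to go through Ye-type strongly polynomial bounds plus a product-structure argument would work, but is more circuitous; the PageRank-specific contraction (uniform $1-\alpha$ teleportation) is what makes the direct VI comparison go through without any state-count factor, and that is the shortcut the paper exploits.
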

\begin{algorithm}[h!]
	\caption{\textsc{Policy Iteration with Local Budget}}
 	\label{alg:policy_iter}
	\begin{algorithmic}[1]
		\REQUIRE Graph $G=(\gV, \gE)$, reward $\vr$, set of fixed $\gE_f$ and fragile $\gF$ edges, local budgets $b_v$
		\STATE Initialization: $\gW_0 \subseteq \gF$ as any arbitrary subset, $\mA^{G}$ corresponding to $G$
		\WHILE {$\gW_{k} \neq \gW_{k-1}$}
			\STATE Solve $ (\mI_N - \alpha \mD^{-1} \mA) \vx = \vr$ for $\vx$, where $\mA_{ij} = 1 - \mA^G_{ij}$ if $(i,j)\in \gW_k$
			\quad \# flip the edges
			\STATE Let $l_{ij}\leftarrow(1-2\mA_{ij}^{G})(\vx_j - \frac{\vx_i - \vr_i}{\alpha})$ for all $(i, j) \in\gF$
			\qquad \qquad \# calculate the improvement
			\STATE Let $\gL_v \leftarrow \{
				 (v,j)\in \gF   \mid
				 l_{vj} > 0 ~\land~
				 l_{vj} \ge \text{top } b_v \text{ largest } l_{vj}    \}, ~\forall v \in \gV$
			\STATE $\gW_k\leftarrow\bigcup_{v} \gL_v,
			\quad
			k \leftarrow k+1$
		\ENDWHILE
		\RETURN $\gW_k$ \qquad \qquad \qquad \#  optimal graph $\att{G} \in \adm$ obtained by flipping all $(i,j)\in \gW_k$  of $G$
	\end{algorithmic}
\end{algorithm}
We provide the proof in Sec.~\ref{sec:app_proofs} in the appendix. The main idea for Alg.~\ref{alg:policy_iter} is starting from a random policy, in each iteration we first compute the mean reward before teleportation $\vx$ for the current policy (line 3), and then greedily select the top $b_v$ edges that improve the policy (lines 4-6). This algorithm is guaranteed to converge to the optimal policy, and thus to the optimal configuration of fragile edges.

\textbf{Certificate for local budget only.} 
Proposition \ref{prop:policy_iter} implies that for local constraints only, the optimal solution does not depend on the teleport vector $\vz$.
Regardless of the node $t$ (i.e.\ which $\vz=\ve_t$ in Eq.~\ref{eq:worst_margin_per_class}), the optimal edges to perturb are the same if the admissible set $\adm$ and the reward $\vr$ are the same.
This means that for a fixed $\adm$ we only need to run the algorithm $K\times K$ times to obtain the certificates \textit{for all} $N$ nodes: For each pair of classes $c_1,c_2$ we have a different reward vector  $\vr=-(\mH_{:, c_1}-\mH_{:, c_2})$, and we can recover the \textit{exact} worst-case margins $m^*_{\gt_t, *}(\cdot)$ for all $N$ nodes by just computing $\mathbf{\Pi}$ on the resulting $K\times K$ many perturbed graphs $\att{G}$.
Now, $m^*_{\gt_t, *}(\cdot)>0$ implies certifiable robustness, while
$m^*_{\gt_t, *}(\cdot)<0$ implies certifiable \emph{non-robustness}
due to the exactness of our certificate, i.e. we have found an adversarial example for node $t$.

\textbf{Our approach for both local and global budget.}
\begin{figure}[t]
	\centering
	\includegraphics[width=\textwidth]{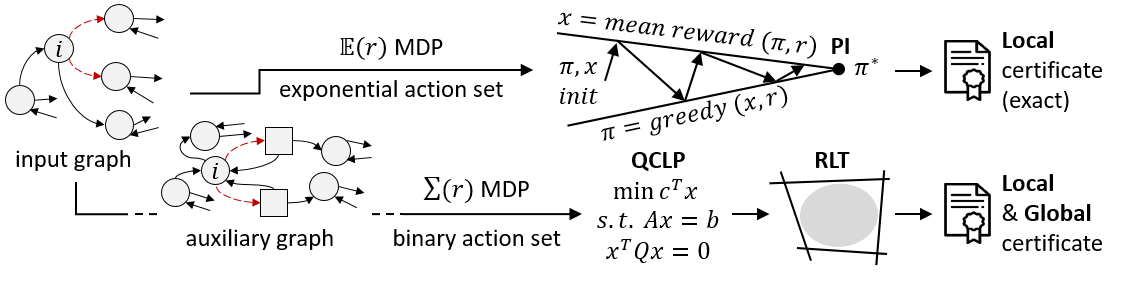}
	\caption{The upper part outlines our approach for local budget only: the exact certificate is efficiently computed with policy iteration. The lower part outlines our 3 step approach for local and global budget: (a) forumlate an MDP on an auxiliary graph, (b) augment the corresponding LP with quadratic constraints to enforce the global budget, and (c) apply the RLT relaxation to the resulting QCLP.}
	\label{fig:overview}
\end{figure}
Algorithm \ref{alg:policy_iter} cannot handle a global budget constraint, and in general solving Problem \ref{prob:pagerank} with global budget is NP-hard. More specifically, it generalizes the Link Building problem \cite{olsen2010maximizing} -- find the set of $k$ optimal edges that point to a given node such that its PageRank score is maximized -- which
is W[1]-hard and for which there exists no fully-polynomial time approximation scheme (FPTAS). It follows that Problem \ref{prob:pagerank} is also W[1]-hard and allows no FPTAS. We provide the proof and more detials in Sec. \ref{app:np_hard} in the appendix.
Therefore, we develop an alternative approach that consists of three steps and is outlined in the lower part of Fig.~\ref{fig:overview}: 
(a) We propose an alternative unconstrained MDP based on an auxiliary graph which reduces the action set from exponential to binary by adding only $|\gF|$ auxiliary nodes;
(b) We reformulate the problem as a non-convex Quadratically Constrained Linear Program (QCLP) to be able to handle the global budget;
(c) We utilize the Reformulation Linearization Technique (RLT) to construct a convex relaxation of the QCLP, enabling us to efficiently compute a lower bound on the worst-case margin.

\textbf{(a) Auxiliary graph.}
Given an input graph we add one auxiliary node $v_{ij}$ for each fragile edge $(i,j)\in\gF$. We define a total cost infinite horizon MDP on this auxiliary graph (Fig.~\ref{fig:auxgraph_after}) that solves Problem \ref{prob:pagerank} \emph{without} constraints.
The MDP is defined by the 4-tuple $(\gS, (\gA_i)_{i \in \gS}, p, r)$, where $\gS$ is the state space (preexisting and auxiliary nodes), and $\gA_i$ is the set of admissible actions in state $i$. Given action $a \in \gA_i$,  $p(j|i,a)$ is the probability to go to state $j$ from state $i$ and $r(i, a)$ the instantaneous reward.
Each \emph{preexisting} node $i$ has a single action $\gA_i=\{a\}$,
reward $r(i, a)=\vr_i$,
and uniform transitions $p(v_{ij}|i, a)=d_i^{-1}, \forall v_{ij} \in \gF^i$, discounted by $\alpha$ for the fixed edges $p(j|i,a)=\alpha\cdot d_i^{-1}, \forall (i,j) \in \gE_f$, where $d_i=|\gE_f^i \cup \gF^i|$ is the degree.
For each \emph{auxiliary} node we allow two actions $\gA_{v_{ij}}=\{\text{on}, \text{off}\}$. For action "off" node $v_{ij}$ goes back to node $i$ with probability $1$ and obtains reward $-\vr_i$: $p(i|v_{ij}, \text{off})=1, r(v_{ij}, 
\text{off})=-\vr_i$.
For action "on" node $v_{ij}$ goes only to node $j$ with probability $\alpha$ (the model is substochastic) and obtains $0$ reward: $p(j|v_{ij}, \text{on})=\alpha, r(v_{ij}, \text{on})=0$.
We introduce fewer aux.\ nodes compared to previous work \citep{csaji2010pagerank, fercoq2012optimization}.

\textbf{(b) Global and local budgets QCLP.} 
Based on this unconstrained MDP, we can derive a corresponding linear program (LP) solving the same problem \citep{puterman1994markov}. Since the MDP on the auxiliary graph has (at most) binary action sets, the LP has only $2|\gV|+3|\gF|$ constraints and variables.
This is in strong contrast to the LP corresponding to the previous average cost MPD \citep{fercoq2013ergodic} operating directly on the original graph that has an exponential number of constraints and variables.
Lastly, we enrich the LP for the aux.\ graph MDP  with additional constraints enforcing the local and global budgets.
The constraints for the local budget are linear, however, the global budget requires quadratic constraints resulting in a quadratically constrained linear program (QCLP) that exactly solves Problem \ref{prob:pagerank}.

\begin{proposition}
	\label{prop:qclp}
	Solving the following QCLP (with decision variables $x_v, \xof_{ij}, \xon_{ij}, \bof_{ij}, \bon_{ij}$) is equivalent to solving Problem \ref{prob:pagerank} with local and global constraints, i.e.\ the value of the objective function is the same in the optimal solution. We can recover $\ppr(\vz)_v$ from $x_v$ via $\ppr(\vz)_v=(1-k_vd_v^{-1})x_v$. Here $k_v$ is the number of "off" fragile edges (the ones where $x_{ij}^0>0$) in the optimal~solution.

	\begin{subequations}
	\label{eq:qclp}
	\begin{align}
		\label{eq:obj_value}
		&\max \sum\nolimits_{v\in\gV} x_v\vr_v - \sum\nolimits_{(i, j) \in \gF}\xof_{ij}\vr_i&&
		\\
		\label{eq:qclp_pagerank}
		&x_v - \underbrace{\alpha\sum\nolimits_{(i,v) \in \gE_f} \frac{x_i}{d_i}}_{\text{incoming fixed edges}}
		-\underbrace{\alpha\sum\nolimits_{(j, v) \in \gF} \xon_{jv}}_{\text{incoming "on" edges}}
		-\underbrace{\sum\nolimits_{(v, k)\in\gF} \xof_{vk}}_{\text{returning "off" edges}}
		= (1-\alpha)\vz_v
		&&\forall v \in \gV
		\\
		\label{eq:qclp_aux}
		&\xof_{ij} + \xon_{ij} = \frac{x_i}{d_i}
		,\qquad
		\xof_{ij} \ge 0
		,\qquad
		\xon_{ij} \ge 0
		&&\forall (i, j) \in \gF
		\\
		\label{eq:qclp_local}
		&\sum\nolimits_{(v, i) \in \gF}
		\underbrace{[(v, i) \in \gE] \xof_{ij}}_{\text{removed existing edges}} +
		\underbrace{[(v, i) \notin \gE] \xon_{ij}}_{\text{newly added edges}}
		\le \frac{x_v}{d_v} b_v,
		\qquad
		x_v \ge 0
		&&\forall v\in \gV
		\\
		\label{eq:qclp_quadratic}
		&\xof_{ij}\bon_{ij}=0
		,\quad
		\xon_{ij}\bof_{ij}=0
		,\quad
		\bon_{ij}=1-\bof_{ij}
		,\quad
		0 \le \bof_{ij} \le 1
		&&\forall (i, j) \in \gF
		\\
		\label{eq:qclp_global}
		&\sum\nolimits_{(i, j)\in\gF} [(i,j) \in \gE] \bof_{ij} + [(i,j) \notin \gE] \bon_{ij}\le B &&
	\end{align}
	\end{subequations}
\end{proposition}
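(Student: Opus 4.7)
The plan is to establish the equivalence by passing from the auxiliary-graph MDP to its LP, adding linear local-budget constraints on the occupation measures, and then introducing selector variables $\bof, \bon$ coupled to the $x$-variables by complementarity constraints so that the global budget can be stated as a count.

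First I would verify that the unconstrained total-reward MDP of step (a) really encodes Problem \ref{prob:pagerank} without constraints. The key observation is that each auxiliary node $v_{ij}$ gives the optimizer a binary choice that faithfully simulates the decision "include edge $(i,j)$ or not": action "on" sends mass $\alpha$ to $j$ (as if the edge were present and walked with the discount), while action "off" returns the mass to $i$ and immediately collects a compensating reward $-\vr_i$ that cancels the spurious re-visit of $i$. Substochasticity via $\alpha$ plays the role of teleportation, and the $(1-\alpha)\vz_v$ terms inject the teleport mass. I would verify that for any $\gF_+\subseteq\gF$, choosing "on" iff $(i,j)\in\gF_+$ gives total reward $\vr^T\ppr_{\att G}(\vz)$, matching Problem~\ref{prob:pagerank} exactly.

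Second, I would write the standard primal occupation-measure LP for this MDP \citep{puterman1994markov}, whose variables $x_v,\xof_{ij},\xon_{ij}$ are the expected discounted visit frequencies of (state,action) pairs. Equation (\ref{eq:qclp_pagerank}) is then flow conservation at each preexisting node, with fixed-edge, "on"-edge, and "off"-edge in-flows, and injection $(1-\alpha)\vz_v$; equation (\ref{eq:qclp_aux}) is flow conservation at auxiliary nodes: they are entered with rate $x_i/d_i$ and split between the two actions. The objective (\ref{eq:obj_value}) collects $\vr_v$ per visit to $v$ and the $-\vr_i$ per "off" action at $v_{ij}$. Without (\ref{eq:qclp_local})--(\ref{eq:qclp_global}) this LP has a deterministic optimum solving the unconstrained Problem~\ref{prob:pagerank}.

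Third, I would add the local budget. In a deterministic solution, (\ref{eq:qclp_aux}) forces exactly one of $\xof_{vi},\xon_{vi}$ to equal $x_v/d_v$ and the other to be $0$, so summing $\xof_{vi}$ over $(v,i)\in\gE\cap\gF$ and $\xon_{vi}$ over $(v,i)\in\gF\setminus\gE$ yields $(x_v/d_v)$ times the \emph{count} of perturbed outgoing edges at $v$; constraint (\ref{eq:qclp_local}) then bounds this count by $b_v$ whenever $v$ is visited. The global budget, however, is a plain count independent of visitation frequency and cannot be written linearly in $x$. This is where the selector variables enter: $\bof_{ij}+\bon_{ij}=1$ and the complementarity equations $\xof_{ij}\bon_{ij}=\xon_{ij}\bof_{ij}=0$ force $(\bof_{ij},\bon_{ij})\in\{(0,1),(1,0)\}$ whenever the auxiliary node is reachable (if both $x^0$ and $x^1$ vanish, the aux node is unreachable and the $\beta$ value is immaterial). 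Constraint (\ref{eq:qclp_global}) then correctly counts perturbed edges.

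Finally, I would derive the recovery formula. The LP variable $x_v$ counts all discounted visits to $v$ in the auxiliary MDP, which overcounts the true PageRank by the return flow from "off" auxiliary nodes at $v$: each of the $k_v$ outgoing "off" edges contributes $x_v/d_v$ of spurious return, giving $\ppr(\vz)_v=x_v-k_v\cdot x_v/d_v=(1-k_vd_v^{-1})x_v$. The correction $-\sum_{(i,j)\in\gF}\xof_{ij}\vr_i$ in (\ref{eq:obj_value}) equals $-\sum_v\vr_v\cdot k_v(x_v/d_v)$, so the objective evaluates to $\vr^T\ppr(\vz)$ as claimed. The main obstacle I expect is the last conceptual point: the LP itself allows randomized policies at auxiliary nodes, so I need to argue carefully that, even with the local-budget constraint (\ref{eq:qclp_local}) included, the optimum is attained by a deterministic policy, or equivalently that the complementarity constraints (\ref{eq:qclp_quadratic}) do not cut off any genuinely better fractional solution. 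This can be handled by showing that for fixed $\beta$ values consistent with some $\gF_+$ the residual LP reduces exactly to solving Problem~\ref{prob:pagerank} on the perturbed graph $\att G$, so any fractional $\beta$ could be rounded to one of its deterministic extreme points without decreasing the objective.
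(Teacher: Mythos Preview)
Your proof follows essentially the same route as the paper: auxiliary-graph MDP, its occupation-measure LP for (\ref{eq:qclp_pagerank})--(\ref{eq:qclp_aux}), the local budget via the shared $x_i/d_i$ factor for (\ref{eq:qclp_local}), selector variables with complementarity for (\ref{eq:qclp_quadratic})--(\ref{eq:qclp_global}), and the return-flow correction to obtain the recovery formula and the objective identity $\sum_v x_v\vr_v-\sum_{(i,j)}\xof_{ij}\vr_i=\vr^T\ppr$. Your derivation of the recovery formula is in fact a bit cleaner than the paper's write-up.

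One small clarification on your last paragraph: the concern is slightly over-cautious. The proposition is about the QCLP, and the complementarity constraints (\ref{eq:qclp_quadratic}) already force determinism at every visited auxiliary node: if both $\xof_{ij}>0$ and $\xon_{ij}>0$ then $\bon_{ij}=\bof_{ij}=0$, contradicting $\bof_{ij}+\bon_{ij}=1$. Hence the QCLP feasible set is exactly the set of deterministic on/off configurations satisfying both budgets, in bijection with the admissible set of Problem~\ref{prob:pagerank}, and the objectives match pointwise. You therefore do not need a separate argument that the LP (\ref{eq:obj_value})--(\ref{eq:qclp_local}) attains its optimum at a deterministic policy, nor a rounding step for fractional $\beta$.
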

\textbf{Key idea and insights.}
Eqs.~\ref{eq:qclp_pagerank} and \ref{eq:qclp_aux} correspond to the LP of the unconstrained MDP. Intuitively, the variable $x_v$ maps to the PageRank score of node $v$, and from the variables $\xof_{ij}/\xon_{ij}$ we can recover the optimal policy: if the variable $\xof_{ij}$ (respectively $\xon_{ij}$) is non-zero then in the optimal policy the fragile edge $(i,j)$ is turned off (respectively on). Since there exists a deterministic optimal policy, only one of them is non-zero but never both.
Eq.~\ref{eq:qclp_local} corresponds to the local budget. Remarkably, despite the variables  $\xof_{ij}/\xon_{ij}$ not being integral, since they share the factor $x_id_i^{-1}$ from Eq.~\ref{eq:qclp_aux} we can exactly count the number of edges that are turned off or on using only linear constraints. Eqs.~\ref{eq:qclp_quadratic} and \ref{eq:qclp_global} enforce the global budget. From Eq.~\ref{eq:qclp_quadratic} we have that whenever $\xof_{ij}$ is nonzero it follows that $\bon_{ij}=0$ and $\bof_{ij}=1$ since that is the only configuration that satisfies the constraints (similarly for $\xon_{ij}$). Intuitively, this effectively makes the $\bof_{ij}/\bon_{ij}$ variables "counters" and we can utilize them in Eq. \ref{eq:qclp_global} to enforce the total number of perturbed edges to not exceed $B$. See detailed proof in Sec.~\ref{sec:app_proofs}.
\begin{figure}
    \begin{subfigure}[b]{0.28\textwidth}
		\includegraphics[width=\textwidth]{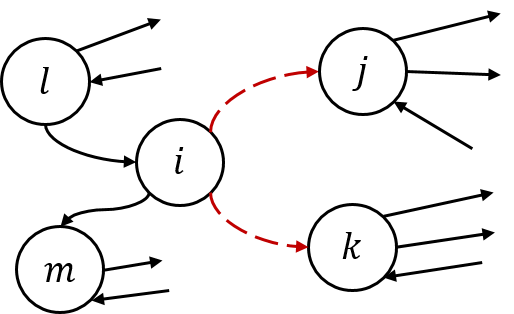}
		\caption{$\gA_i=\{\emptyset, \{j\}, \{k\}, \{j,k\}\}$}
		\label{fig:auxgraph_before}
	\end{subfigure}
	\begin{subfigure}[b]{0.7\textwidth}
		\includegraphics[width=\textwidth]{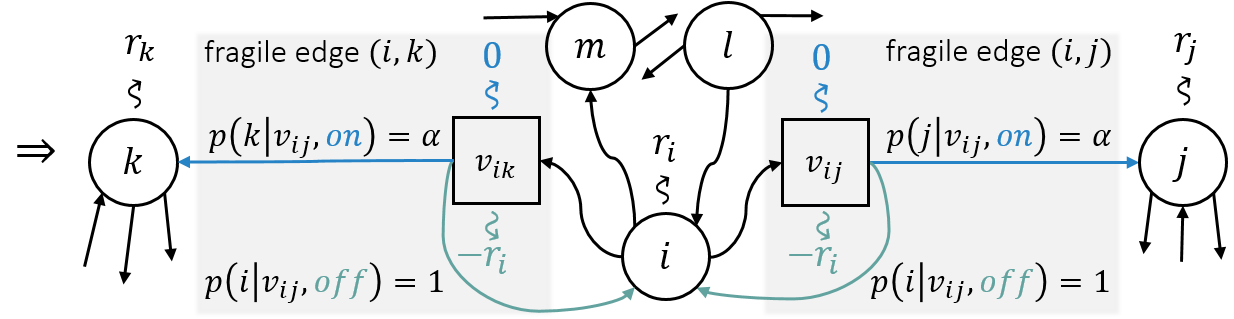}
		\caption{$\gA_i=\{a\}$ and
			$\gA_{v_{ij}}=\gA_{v_{ik}}=\{\text{off}, \text{on}\}$}
		\label{fig:auxgraph_after}
	\end{subfigure}
	\caption{Construction of the auxiliary graph. For each fragile edge $(i,j)$ marked with a red dashed line, we add one node $v_{ij}$ and two actions: \{on, off\} to the auxiliary graph. If the edge is configured as "on" $v_{ij}$ goes back to node $i$ with prob.\ $1$. If configured as "off" it goes only to node $j$ with prob.\ $\alpha$.}
	\label{fig:auxgraph}
\end{figure}

\textbf{(c) Efficient Reformulation Linearization Technique (RLT).}
The quadratic constraints in our QCLP make the problem non-convex and difficult to solve. We relax the problem using the Reformulation Linearization Technique (RLT) \citep{sherali1995reformulation} which gives us an upper bound on the objective. The alternative SDP-relaxation \citep{vandenberghe1996semidefinite} based on semidefinite programming is not suitable for our problem since the constraints are trivially satisfied (see Appendix \ref{app:sdp} for details).
While in general, the RLT introduces many new variables (replacing each product term $m_im_j$ with a variable $M_{ij}$) along with multiple new linear inequality constraints, it turns out that in our case the solution is highly compact:
\begin{proposition}
	\label{prop:rlt}
	Given a fixed upper bound $\overline{x_v}$ for $x_v$ and using the RLT relaxation, the quadratic constraints in Eqs.~\ref{eq:qclp_quadratic} and \ref{eq:qclp_global} transform into the following single \emph{linear} constraint.
	\begin{align}
		\label{eq:rlt}
		\sum\nolimits_{(i, j)\in\gF} [(i,j) \in \gE] \xof_{ij}d_i(\overline{x_i})^{-1}
		+ [(i,j) \notin \gE] \xon_{ij}d_i(\overline{x_i})^{-1}
		\le B
	\end{align}
\end{proposition}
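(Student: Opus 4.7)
The plan is to apply the RLT construction to every bilinear expression in Eqs.~\ref{eq:qclp_quadratic}--\ref{eq:qclp_global}, track the auxiliary scalar variables that replace each product, and exploit the fact that the linear equalities already in the QCLP (Eqs.~\ref{eq:qclp_pagerank}--\ref{eq:qclp_aux}) pin almost all of them down, so that only one inequality per fragile edge survives and they aggregate into Eq.~\ref{eq:rlt}.

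Concretely, for each $(i,j)\in\gF$ I would introduce new scalar variables $m^f_{ij}:=\bof_{ij}\xof_{ij}$, $m^n_{ij}:=\bof_{ij}\xon_{ij}$, $n^f_{ij}:=\bon_{ij}\xof_{ij}$, $n^n_{ij}:=\bon_{ij}\xon_{ij}$, $p^f_{ij}:=\bof_{ij}x_i$, and $p^n_{ij}:=\bon_{ij}x_i$. The quadratic constraints in Eq.~\ref{eq:qclp_quadratic} linearize to $m^n_{ij}=0$ and $n^f_{ij}=0$. Multiplying the equality $\xof_{ij}+\xon_{ij}=x_i/d_i$ from Eq.~\ref{eq:qclp_aux} by $\bof_{ij}$ gives $m^f_{ij}+m^n_{ij}=p^f_{ij}/d_i$, which together with $m^n_{ij}=0$ yields $m^f_{ij}=p^f_{ij}/d_i$. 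Multiplying the identity $\bof_{ij}+\bon_{ij}=1$ by $\xof_{ij}$ gives $m^f_{ij}+n^f_{ij}=\xof_{ij}$, hence $m^f_{ij}=\xof_{ij}$. Equating the two expressions for $m^f_{ij}$ yields the linear identity $p^f_{ij}=d_i\xof_{ij}$, and the symmetric manipulation with $\bon_{ij}$ and $\xon_{ij}$ yields $p^n_{ij}=d_i\xon_{ij}$.

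The RLT bound-factor product $\bof_{ij}(\overline{x_i}-x_i)\ge 0$ then linearizes to $\overline{x_i}\bof_{ij}\ge p^f_{ij}=d_i\xof_{ij}$, i.e.\ $\bof_{ij}\ge d_i\xof_{ij}/\overline{x_i}$; analogously $\bon_{ij}\ge d_i\xon_{ij}/\overline{x_i}$. Substituting these two lower bounds into the global-budget constraint \ref{eq:qclp_global} gives Eq.~\ref{eq:rlt} directly.

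The main obstacle is bookkeeping: one needs to verify that no other RLT product (for instance $(1-\bof_{ij})\xof_{ij}\ge 0$, $(x_i/d_i-\xof_{ij})\bof_{ij}\ge 0$, or their $\bon_{ij}$-analogues) produces a linear inequality that would further tighten or conflict with Eq.~\ref{eq:rlt}. Because every bilinear term we care about has already been expressed as a linear function of $\{\xof_{ij},\xon_{ij},\bof_{ij},\bon_{ij},x_i\}$ by the identities above, each such extra product either collapses to one already encountered or decouples per-edge, so the only surviving \emph{global} coupling constraint after the relaxation is the one stated in the proposition.
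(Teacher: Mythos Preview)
Your argument is correct and reaches the same final inequalities $\bof_{ij}\ge d_i\xof_{ij}/\overline{x_i}$ and $\bon_{ij}\ge d_i\xon_{ij}/\overline{x_i}$ as the paper, but you take a genuinely different RLT route. The paper applies only the McCormick bound-factor envelope to the two bilinear equalities $\xof_{ij}\bon_{ij}=0$ and $\xon_{ij}\bof_{ij}=0$, using the derived variable bounds $0\le\xof_{ij},\xon_{ij}\le \overline{x_i}/d_i$ and $0\le\bof_{ij},\bon_{ij}\le 1$; three of the four McCormick inequalities per term are trivially satisfied because the lift variable is zero, and the single surviving inequality per term already gives the needed lower bound on $\bof_{ij}$ (resp.\ $\bon_{ij}$). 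You instead perform constraint-multiplication RLT: you lift $\bof_{ij}x_i$ and $\bon_{ij}x_i$ as well, multiply the linear identities $\xof_{ij}+\xon_{ij}=x_i/d_i$ and $\bof_{ij}+\bon_{ij}=1$ by the $\beta$- and $x$-variables, collapse the resulting lift variables to linear expressions, and only then apply a single bound-factor product $\bof_{ij}(\overline{x_i}-x_i)\ge 0$. Your path is longer and introduces more auxiliary variables, but it is more systematic and makes explicit why the upper bound $\overline{x_i}/d_i$ on $\xof_{ij}$ that the paper uses implicitly is legitimate. The paper's route is shorter because it recognizes up front that the only non-trivial McCormick inequality is the ``upper-upper'' one (Eq.~\ref{eq:rlt_four_linear_d}), so no additional lift variables beyond the two fixed at zero are ever needed.
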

Proof provided in Sec.~\ref{sec:app_proofs} in the appendix. By replacing Eqs.~\ref{eq:qclp_quadratic} and \ref{eq:qclp_global} with Eq.~\ref{eq:rlt} in Proposition \ref{prop:qclp}, we obtain a linear program which can be efficiently solved.
Remarkably, we only have $x_v, \xof_{ij}, \xon_{ij}$ as decision variables since we were able to eliminate all other variables.
The solution is an upper bound on the solution for Problem \ref{prob:pagerank} and a lower bound on the solution for Problem \ref{prob:worst_margin}. The final relaxed QCLP can also be interpreted as a constrained MPD with a single additional constraint (Eq.~\ref{eq:rlt}) which admits a possibly randomized optimal policy with at most one randomized state \cite{altman1999constrained}.

\textbf{Certificate for local and global budget.}
To solve the relaxed QCLP and compute the final certificate we need to provide the upper bounds $\overline{x_v}$ for the constraint in Eq.~\ref{eq:rlt}. 
Since the quality of the RLT relaxation depends on the tightness of these upper bounds, we have to carefully select them. We provide here one solution (see Sec.~\ref{app:bound} in the appendix for a faster to compute, but less tight, alternative):
Given an instance of Problem \ref{prob:pagerank}, we can set the reward to $\vr=\ve_v$ and invoke Algorithm \ref{alg:policy_iter}, which is highly efficient, using the same fragile set and the same local budget. Since this explicitly maximizes $x_v$, the objective value of the problem is guaranteed to give a valid  upper bound $\overline{x}_v$. Invoking this procedure for every node, leads to the required upper bounds.

Now, to compute the certificate with local and global budget for a target node $t$, we solve the relaxed problem  for all $c \ne \gt_t$, leading to objective function values $L_{ct}\geq -m^*_{\gt_t, c}(t)$ (minus due to the change from min to max). Thus, $L_{*, t} = \min_{c \ne \gt_t} -L_{ct}$ is a lower bound on the worst-case margin $m^*_{\gt_t, *}(t)$. If the lower bound is positive then node $t$ is guaranteed to be certifiably robust -- there exists no adversarial attack (among all graphs in $\adm$) that can change the prediction for node $t$. 

For our policy iteration approach if $m^*_{\gt_t, *}(t)<0$ we are guaranteed to have found an adversarial example since the certificate is exact, i.e. we also have a non-robustness certificate. However in this case, if the lower bound $L_{*, t}$ is negative we do not necessarily have an adversarial example. Instead, we can perturb the graph with the optimal configuration of fragile edges for the relaxed problem, and inspect whether the predictions change. See Fig.\ref{fig:overview} for an overview of both approaches.

\section{Robust training for graph neural networks}
In Sec.~\ref{sec:certificates} we introduced two methods to efficiently compute certificates given a trained $\ppr$-PPNP model. 
We now show that these can naturally be used to go one step further -- to \emph{improve} the robustness of the model. The main idea is to utilize the worst-case margin during training to encourage the model to learn more robust weights.
Optimizing some robust loss $\gL_\theta$ with respect to the model parameters $\theta$ (e.g.\ for $\ppr$-PNPP $\theta$ are the neural network parameters) that depends on the worst-case margin $m^*_{\gt_v, *}(v)$  is generally hard since it involves an inner optimization problem, namely finding the worst-case margin. This prevents us to easily take the gradient of $m^*_{\gt_v, c}(v)$ (and, thus, $\gL_\theta$) w.r.t.\ the parameters $\theta$. 
Previous approaches tackle this challenge by using the dual \cite{wong2018provable}.

Inspecting our problem, however, we see that we can directly compute the gradient. Since $m^*_{\gt_v, c}(v)$ (respectively the corresponding lower bound) is a linear function of $\mH=f_{\theta}(\mX)$ and $\ppr_G$, and furthermore the admissible set $\adm$ over which we are optimizing is compact, it follows from Danskin's theorem \citep{danskin1967theory}
that we can simply compute the gradient of the loss at the optimal point. We have
$\frac{\partial m^*_{\gt_v, c}(v)}{ \partial \mH_{i, \gt_v}} = \ppr^*(\ve_v)_i$ and $\frac{\partial m^*_{\gt_v, c}(v)}{\partial \mH_{i, c}} = -\ppr^*(\ve_v)_i$, i.e.\ the gradient equals to the optimal ($\pm$) PageRank scores computed in our certification approaches.

\textbf{Robust training.} To improve robustness  \citet{wong2018provable} proposed to optimize
the \emph{robust cross-entropy loss}:
$\gL_{RCE}=\gL_{CE}(\gt_v^*, -\vm^*_{\gt_v}(v))$,
where $\gL_{CE}$ is the standard cross-entropy loss operating on the logits,
and $\vm^*_{\gt_v}(v)$ is a vector such that at index $c$ we have $m^*_{\gt_v, c}(v)$.
Previous work has shown that if the model is overconfident there is a potential issue when using $\gL_{RCE}$ since it encourages high certainty under the worst-case perturbations \citep{zugner2019adversarial}. Therefore, we also study the alternative \emph{robust hinge loss}.
Since the attacker wants to minimize the worst-case margin $m^*_{\gt_t, *}(t)$ (or its lower bound), a straightforward idea is to try to maximize it during training. To achieve this we add a hinge loss penalty term to the standard cross-entropy loss. Specifically:
$\gL_{CEM} = \sum\nolimits_{v \in \gV_L} 
\big[\gL_{CE}(\gt_v^*, \mH_{v, :}^{\text{diff}})
+ \sum\nolimits_{c \in \gC, c \ne \gt_v^*}
\max(0, M - m^*_{\gt_v, c}(v) )\big]$. 
The second term for a single node $v$ is positive if $m^*_{\gt_v, c}(v)<M$ and zero otherwise -- the node $v$ is certifiably robust with a margin of at least $M$. Effectively, if all training nodes are robust, the second term becomes zero, thus, reducing $\gL_{CEM}$ to the standard cross-entropy loss with robustness guarantees. Note again that we can easily compute the gradient of these losses w.r.t.\ the (neural network) parameters $\theta$.

\section{Experimental results}
\begin{figure}[t!]
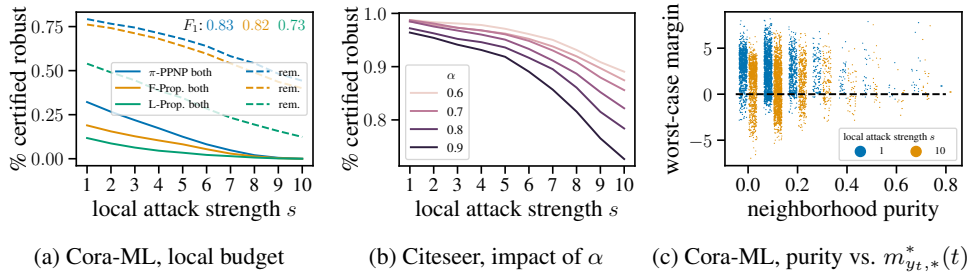

	\centering
	\begin{subfigure}
		[b]{0.31\textwidth}
		\resizebox {\textwidth} {!} {\input{figures/percent_robust_ce_lp_fp_cora_ml.pgf}}
		\caption{Cora-ML, local budget}
		\label{fig:ppnp_label_prop}
	\end{subfigure}
	\begin{subfigure}
		[b]{0.30\textwidth}
		\resizebox {\textwidth} {!} {\input{figures/percent_robust_ce_citeseer_alpha.pgf}}
		\caption{Citeseer, impact of $\alpha$}
		\label{fig:local_alpha}
	\end{subfigure}
	\begin{subfigure}
		[b]{0.305\textwidth}
		\resizebox {\textwidth} {!} {\input{figures/cora_purity/cora_purity_margin.pgf}}
		\caption{Cora-ML, purity vs. $m_{\gt_t, *}^*(t)$}
		\label{fig:local_purity}
	\end{subfigure}
	\caption{
	Increasing local attack strength $s$ (local budget $b_v=\max(d_v - 11 + s, 0)$) decreases ratio of certified nodes.	
	(a) The graph is more robust to removing edges, $\ppr$-PPNP is most robust overall.
	(b) Lowering $\alpha$ improves the robustness.
	(c) Nodes with higher neighborhood purity are more robust.}
	\label{fig:local}
\end{figure}
\textbf{Setup.} 
We focus on evaluating the robustness of $\ppr$-PPNP without robust training and label/feature propagation using our two certification methods. We also verify that robust training improves the robustness of $\ppr$-PPNP while maintaining high predictive accuracy.
We demonstrate our claims on two publicly available datasets: Cora-ML $(N=2,995, |\gE|=8,416, D=2,879, K=7)$ \citep{bojchevski2018deep,mcCallum2000automating} and Citeseer $(N=3,312, |\gE|=4,715, D=3,703, K=6)$ \citep{sen2008collective} with further experiments on Pubmed ($N=19,717, |\gE|=44,324, D=500, K=3$) \citep{sen2008collective} in the appendix.
We configure $\ppr$-PPNP with one hidden layer of size 64 and set $\alpha=0.85$. We select 20 nodes per class for the train/validation set and use the rest for the test set.
We compute the certificates w.r.t.\ the predictions, i.e.\ we set $\gt_t$ in $m^*_{\gt_t, *}(t)$ to the predicted class for node $t$ on the clean graph.
See Sec.~\ref{sec:futher_experiments} in the appendix for further experiments
and Sec.~\ref{sec:exp_details} for more details.
Note, we do not need to compare to any previously introduced adversarial attacks on graphs \citep{dai2018Adversarial, zugner2019adversarial, zugner2018adversarial}, since by the definition of a certificate for a certifiably robust node w.r.t.\ a given admissible set $\adm$ there exist no successful attack within that set. 
 
We construct several different configurations of fixed and fragile edges to gain a better understanding of the robustness of the methods to different kind of adversarial perturbations.
Namely, "both" refers to the scenario where $\gF = \gV \times \gV$, i.e.\ the attacker is allowed to add or remove \emph{any} edge in the graph, while "rem." refers to the scenario where $\gF = \gE$ for a given graph $G=(\gV, \gE)$, i.e.\ the attacker can only remove existing edges.
In addition, for all scenarios we specify the fixed set as $\gE_f=\gE_{mst}$, where $(i,j) \in \gE_{mst}$ if $(i,j)$ belongs to the minimum spanning tree (MST) on the graph $G$.\footnote{
Fixing the MST edges ensures that every node is reachable by every other node for any policy. This is only to simplify our earlier exposition regarding the MDPs and can be relaxed to e.g.\ reachable at the optimal policy.}

\textbf{Robustness certificates: Local budget only.}
We investigate the robustness of different graphs and semi-supervised node classification methods when the attacker has only local budget constraints. We set the local budget $b_v=\max(d_v - 11 + s, 0)$ relative to the degree $d_v$ of node $v$ in the clean graph, and we vary the \textit{local attack strength} $s$ with lower $s$ leading to a more restrictive budget.
Such relative budget is justified since higher degree nodes tend to be more robust in general \citep{zugner2019certifiable, zugner2018adversarial}. 
We then apply our policy iteration algorithm to compute the (exact) worst-case margin for each node.

In Fig.~\ref{fig:ppnp_label_prop} we see that the number of certifiably robust nodes when the attacker can only remove edges is significantly higher compared to when they can also add edges which is consistent with previous work on adversarial attacks \citep{zugner2018adversarial}. As expected, the share of robust nodes decreases with higher budget, and $\ppr$-PPNP is significantly more robust than label propagation since besides the graph it also takes advantage of the node attributes. Feature propagation has similar performance ($F_1$ score) but it is less robust. Note that since our certificate is exact, the remaining nodes are certifiably non-robust!
In Sec. \ref{sec:futher_experiments} in the appendix we also investigate certifiable accuracy -- the ratio of nodes that are both certifiably robust and at the same time have a correct prediction. We find that the certifiable accuracy is relatively close to the clean accuracy, and it decreases gracefully as we in increase the budget.  

\textbf{Analyzing influence on robustness.} In Fig.~\ref{fig:local_alpha} we see that decreasing the damping factor $\alpha$ is an effective strategy to significantly increase the robustness with no noticeable loss in accuracy (at most $0.5\%$ for any $\alpha$, not shown). Thus, $\alpha$ provides a useful trade-off between robustness and the size of the effective neighborhood: higher $\alpha$ implies higher PageRank scores (i.e.\ higher influence) for the neighbors. In general we recommend to set the value as low as the accuracy allows.
In Fig.~\ref{fig:local_purity} we investigate what contributes to certain nodes being more robust than others. We see that neighborhood purity -- 
the share of nodes with the same class in a respective node’s two-hop neighborhood -- plays an important role. High purity leads to high worst-case margin, which translates to certifiable robustness. 

\textbf{Robustness certificates: Local and global budget.} We demonstrate our second approach based on the relaxed QCLP problem by analyzing the robustness as we increase the global budget. We set $\gF=\gE$, i.e.\ the attacker can only remove edges, and vary the local attack strength $s$ corresponding to local budget
$b_v=\max(d_v-11+s, 0)$.
We see in Fig.\ref{fig:global} that by additionally enforcing a global budget we can significantly restrict the success of the attacker compared to having only a local budget (dashed lines). The global constraint increases the number of robust nodes, validating our approach.

\begin{figure}[t!]
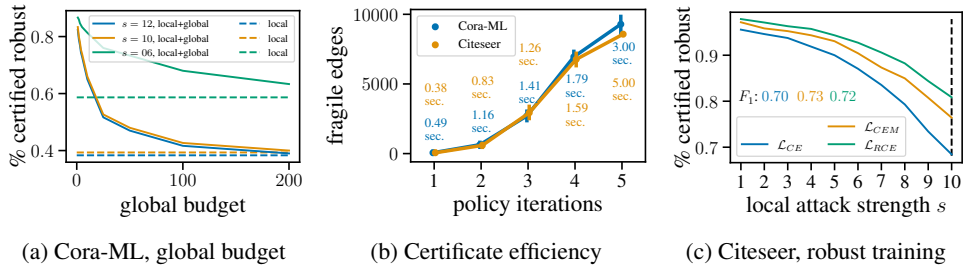

	\centering
	\begin{subfigure}
		[b]{0.3\textwidth}
		\resizebox {\textwidth} {!} {\input{figures/cora_ml_percent_global.pgf}}
		\caption{Cora-ML, global budget}
		\label{fig:global}
	\end{subfigure}
	\begin{subfigure}
		[b]{0.32\textwidth}
		\resizebox {\textwidth} {!} {\input{figures/scaling_addrem.pgf}}
		\caption{Certificate efficiency}
		\label{fig:efficiency}
	\end{subfigure}
	\begin{subfigure}
		[b]{0.295\textwidth}
		\resizebox {\textwidth} {!} {\input{figures/robust_training_citeseer_deg1.pgf}}
		\caption{Citeseer, robust training}
		\label{fig:cite_robust}
	\end{subfigure}
	\caption{(a) The global budget can significantly restrict the attacker compared to having only local constraints.
		(b) Even for large fragile sets Algorithm \ref{alg:policy_iter} only needs few iterations to find the optimal PageRank.
		(c) Our robust training successfully increases the percentage of certifiably robust nodes.
		The increase is largest for the local attack strength that we used during training ($s=10$, dashed line).
	}
	\label{fig:tbd1}
\end{figure}
\textbf{Efficiency.}
Fig.~\ref{fig:efficiency} demonstrates the efficiency of our approach: even for fragile sets as large as $10^4$, Algorithm \ref{alg:policy_iter} finds the optimal solution in just a few iterations. Since each iteration is  itself efficient by utilizing sparse matrix operations, the 
overall wall clock runtime (shown as text annotation) is on the order of few seconds. In Sec.~\ref{sec:futher_experiments} in the appendix, we further investigate the runtime as we increase the number of nodes in the graph, as well as the runtime of our relaxed QCLP.

\textbf{Robust training.}
While not being our core focus, we investigate whether robust training improves the certifiable robustness of GNNs. We set the fragile set $\gF=\gE$ and vary the local budget. The vertical line on Fig.~\ref{fig:cite_robust} indicates the local budget used to train the robust models with losses $\gL_{RCE}$ and  $\gL_{CEM}$. We see that both of our approaches are able to improve the percent of certifiably robust nodes, with the largest improvement (around $13\%$ increase) for the local attack strength we trained on ($s=10$). 
Furthermore, the $F_1$ scores on the test split for Citeseer are as follows:
$0.70$ for $\gL_{CE}$, $0.72$ for $\gL_{RCE}$, and $0.73$ for $\gL_{CEM}$,
i.e.\ the robust training besides improving the ratio of certified nodes, it also improves the clean predictive accuracy of the model. $\gL_{RCE}$ has a higher certifiable robustness, but $\gL_{CEM}$ has a higher $F_1$ score.
There is room for improvement in how we approach the robust training: e.g.\ similar to \citet{zugner2019certifiable} we can optimize over the worst-case margin for the unlabeled in addition to the labeled nodes. We leave this as a future research direction.

\section{Conclusion}
We derive the first (non-)robustness certificate for graph neural networks regarding perturbations of the graph structure, and the first certificate overall for label/feature propagation. Our certificates are flexible w.r.t.\ the threat model, can handle both local (per node) and global budgets, and can be efficiently computed. We also propose a robust training procedure that increases the number of certifiably robust nodes while improving the predictive accuracy. As future work, we aim to consider perturbations and robustification of the node features and the graph structure jointly.

\subsubsection*{Acknowledgments}
This research was supported by the German Research Foundation, Emmy Noether grant GU 1409/2-1, and the German Federal Ministry of Education and Research (BMBF), grant no. 01IS18036B. The authors of this work take full responsibilities for its content.

\bibliographystyle{icml2019}
\bibliography{paper}

\begin{thebibliography}{60}
\providecommand{\natexlab}[1]{#1}
\providecommand{\url}[1]{\texttt{#1}}
\expandafter\ifx\csname urlstyle\endcsname\relax
  \providecommand{\doi}[1]{doi: #1}\else
  \providecommand{\doi}{doi: \begingroup \urlstyle{rm}\Url}\fi

\bibitem[Altman(1999)]{altman1999constrained}
Altman, E.
\newblock \emph{Constrained {M}arkov decision processes}, volume~7.
\newblock CRC Press, 1999.

\bibitem[Avrachenkov \& Litvak(2006)Avrachenkov and
  Litvak]{avrachenkov2006effect}
Avrachenkov, K. and Litvak, N.
\newblock The effect of new links on google pagerank.
\newblock \emph{Stochastic Models}, 22\penalty0 (2), 2006.

\bibitem[Bojchevski \& G{\"u}nnemann(2018{\natexlab{a}})Bojchevski and
  G{\"u}nnemann]{bojchevski2018bayesian}
Bojchevski, A. and G{\"u}nnemann, S.
\newblock Bayesian robust attributed graph clustering: Joint learning of
  partial anomalies and group structure.
\newblock In \emph{AAAI Conference on Artificial Intelligence},
  2018{\natexlab{a}}.

\bibitem[Bojchevski \& G{\"u}nnemann(2018{\natexlab{b}})Bojchevski and
  G{\"u}nnemann]{bojchevski2018deep}
Bojchevski, A. and G{\"u}nnemann, S.
\newblock Deep gaussian embedding of graphs: Unsupervised inductive learning
  via ranking.
\newblock In \emph{International Conference on Learning Representations,
  {ICLR}}, 2018{\natexlab{b}}.

\bibitem[Bojchevski \& G{\"{u}}nnemann(2019)Bojchevski and
  G{\"{u}}nnemann]{bojchevski2019adversarial}
Bojchevski, A. and G{\"{u}}nnemann, S.
\newblock Adversarial attacks on node embeddings via graph poisoning.
\newblock In \emph{International Conference on Machine Learning, {ICML}}, 2019.

\bibitem[Bojchevski et~al.(2017)Bojchevski, Matkovic, and
  G{\"u}nnemann]{bojchevski2017robust}
Bojchevski, A., Matkovic, Y., and G{\"u}nnemann, S.
\newblock Robust spectral clustering for noisy data: Modeling sparse
  corruptions improves latent embeddings.
\newblock In \emph{International Conference on Knowledge Discovery and Data
  Mining, {KDD}}, pp.\  737--746, 2017.

\bibitem[Buchnik \& Cohen(2018)Buchnik and Cohen]{buchnik2018bootstrapped}
Buchnik, E. and Cohen, E.
\newblock Bootstrapped graph diffusions: Exposing the power of nonlinearity.
\newblock In \emph{Abstracts of the 2018 {ACM} International Conference on
  Measurement and Modeling of Computer Systems, {SIGMETRICS}}, 2018.

\bibitem[Cai(2008)]{cai2008parameterized}
Cai, L.
\newblock Parameterized complexity of cardinality constrained optimization
  problems.
\newblock \emph{The Computer Journal}, 51\penalty0 (1):\penalty0 102--121,
  2008.

\bibitem[Castillo \& Davison(2010)Castillo and
  Davison]{castillo2010adversarial}
Castillo, C. and Davison, B.~D.
\newblock Adversarial web search.
\newblock \emph{Foundations and Trends in Information Retrieval}, 4\penalty0
  (5), 2010.

\bibitem[Chen et~al.(2019)Chen, Wu, Lin, and Xuan]{chen2019can}
Chen, J., Wu, Y., Lin, X., and Xuan, Q.
\newblock Can adversarial network attack be defended?
\newblock \emph{arXiv preprint arXiv:1903.05994}, 2019.

\bibitem[Cs{\'{a}}ji et~al.(2010)Cs{\'{a}}ji, Jungers, and
  Blondel]{csaji2010pagerank}
Cs{\'{a}}ji, B.~C., Jungers, R.~M., and Blondel, V.~D.
\newblock Pagerank optimization in polynomial time by stochastic shortest path
  reformulation.
\newblock In \emph{ALT, 21st International Conference}, 2010.

\bibitem[Cs{\'{a}}ji et~al.(2014)Cs{\'{a}}ji, Jungers, and
  Blondel]{csaji2014pagerank}
Cs{\'{a}}ji, B.~C., Jungers, R.~M., and Blondel, V.~D.
\newblock Pagerank optimization by edge selection.
\newblock \emph{Discrete Applied Mathematics}, 169, 2014.

\bibitem[Dai et~al.(2018)Dai, Li, Tian, Huang, Wang, Zhu, and
  Song]{dai2018Adversarial}
Dai, H., Li, H., Tian, T., Huang, X., Wang, L., Zhu, J., and Song, L.
\newblock Adversarial attack on graph structured data.
\newblock In \emph{International Conference on Machine Learning, {ICML}}, 2018.

\bibitem[Danskin(1967)]{danskin1967theory}
Danskin, J.~M.
\newblock The theory of max-min and its application to weapons allocation
  problems.
\newblock 1967.

\bibitem[de~Kerchove et~al.(2008)de~Kerchove, Ninove, and
  Van~Dooren]{de2008maximizing}
de~Kerchove, C., Ninove, L., and Van~Dooren, P.
\newblock Maximizing pagerank via outlinks.
\newblock \emph{Linear Algebra and its Applications}, 429\penalty0 (5-6), 2008.

\bibitem[Feng et~al.(2019)Feng, He, Tang, and Chua]{feng2019graph}
Feng, F., He, X., Tang, J., and Chua, T.-S.
\newblock Graph adversarial training: Dynamically regularizing based on graph
  structure.
\newblock \emph{arXiv preprint arXiv:1902.08226}, 2019.

\bibitem[Fercoq(2012)]{fercoq2012optimization}
Fercoq, O.
\newblock \emph{Optimization of Perron eigenvectors and applications: from web
  ranking to chronotherapeutics}.
\newblock PhD thesis, Ecole Polytechnique X, 2012.

\bibitem[Fercoq et~al.(2013)Fercoq, Akian, Bouhtou, and
  Gaubert]{fercoq2013ergodic}
Fercoq, O., Akian, M., Bouhtou, M., and Gaubert, S.
\newblock Ergodic control and polyhedral approaches to pagerank optimization.
\newblock \emph{{IEEE} Trans. Automat. Contr.}, 58\penalty0 (1), 2013.

\bibitem[Fey \& Lenssen(2019)Fey and Lenssen]{fey2019fast}
Fey, M. and Lenssen, J.~E.
\newblock Fast graph representation learning with pytorch geometric.
\newblock \emph{arXiv preprint arXiv:1903.02428}, 2019.

\bibitem[Fout et~al.(2017)Fout, Byrd, Shariat, and Ben{-}Hur]{fout2017protein}
Fout, A., Byrd, J., Shariat, B., and Ben{-}Hur, A.
\newblock Protein interface prediction using graph convolutional networks.
\newblock In \emph{Neural Information Processing Systems, {NIPS}}, 2017.

\bibitem[Goodfellow et~al.(2015)Goodfellow, Shlens, and
  Szegedy]{goodfellow2015explaining}
Goodfellow, I.~J., Shlens, J., and Szegedy, C.
\newblock Explaining and harnessing adversarial examples.
\newblock In \emph{International Conference on Learning Representations,
  {ICLR}}, 2015.

\bibitem[Haveliwala(2002)]{haveliwala2002topic}
Haveliwala, T.~H.
\newblock Topic-sensitive pagerank.
\newblock In \emph{Eleventh International World Wide Web Conference, {WWW}},
  2002.

\bibitem[Hein \& Andriushchenko(2017)Hein and Andriushchenko]{hein2017formal}
Hein, M. and Andriushchenko, M.
\newblock Formal guarantees on the robustness of a classifier against
  adversarial manipulation.
\newblock In \emph{Neural Information Processing Systems, {NIPS}}, 2017.

\bibitem[Hoang et~al.(2019)Hoang, Choong, and Murata]{hoang2019learning}
Hoang, N., Choong, J.~J., and Murata, T.
\newblock Learning graph neural networks with noisy labels.
\newblock \emph{arXiv preprint arXiv:1905.01591}, 2019.

\bibitem[Hollanders et~al.(2011)Hollanders, Delvenne, and
  Jungers]{hollanders2011policy}
Hollanders, R., Delvenne, J.-C., and Jungers, R.
\newblock Policy iteration is well suited to optimize pagerank.
\newblock \emph{arXiv preprint arXiv:1108.3779}, 2011.

\bibitem[Hooi et~al.(2016)Hooi, Shah, Beutel, G{\"{u}}nnemann, Akoglu, Kumar,
  Makhija, and Faloutsos]{hooi2016birdnest}
Hooi, B., Shah, N., Beutel, A., G{\"{u}}nnemann, S., Akoglu, L., Kumar, M.,
  Makhija, D., and Faloutsos, C.
\newblock {BIRDNEST:} bayesian inference for ratings-fraud detection.
\newblock In \emph{{SIAM} International Conference on Data Mining}, 2016.

\bibitem[Jeh \& Widom(2003)Jeh and Widom]{jeh2003scaling}
Jeh, G. and Widom, J.
\newblock Scaling personalized web search.
\newblock In \emph{Twelfth International World Wide Web Conference, {WWW}},
  2003.

\bibitem[Kipf \& Welling(2017)Kipf and Welling]{kipf2017semi}
Kipf, T.~N. and Welling, M.
\newblock Semi-supervised classification with graph convolutional networks.
\newblock In \emph{International Conference on Learning Representations,
  {ICLR}}, 2017.

\bibitem[Klicpera et~al.(2019)Klicpera, Bojchevski, and
  Günnemann]{klicpera2018combining}
Klicpera, J., Bojchevski, A., and Günnemann, S.
\newblock Predict then propagate: Graph neural networks meet personalized
  pagerank.
\newblock In \emph{International Conference on Learning Representations,
  {ICLR}}, 2019.

\bibitem[McCallum et~al.(2000)McCallum, Nigam, Rennie, and
  Seymore]{mcCallum2000automating}
McCallum, A., Nigam, K., Rennie, J., and Seymore, K.
\newblock Automating the construction of internet portals with machine
  learning.
\newblock \emph{Inf. Retr.}, 3\penalty0 (2), 2000.

\bibitem[Miller et~al.(2018)Miller, {\c{C}}amurcu, Gomez, Chan, and
  Eliassi-Rad]{miller2018improving}
Miller, B.~A., {\c{C}}amurcu, M., Gomez, A.~J., Chan, K., and Eliassi-Rad, T.
\newblock Improving robustness to attacks against vertex classification.
\newblock 2018.

\bibitem[Olsen(2010)]{olsen2010maximizing}
Olsen, M.
\newblock Maximizing pagerank with new backlinks.
\newblock In \emph{International Conference on Algorithms and Complexity}, pp.\
   37--48, 2010.

\bibitem[Olsen et~al.(2010)Olsen, Viglas, and Zvedeniouk]{olsen2010constant}
Olsen, M., Viglas, A., and Zvedeniouk, I.
\newblock A constant-factor approximation algorithm for the link building
  problem.
\newblock In \emph{International Conference on Combinatorial Optimization and
  Applications}, pp.\  87--96, 2010.

\bibitem[Puterman(1994)]{puterman1994markov}
Puterman, M.~L.
\newblock \emph{Markov Decision Processes: Discrete Stochastic Dynamic
  Programming}.
\newblock John Wiley \& Sons, Inc., 1994.

\bibitem[Raghunathan et~al.(2018)Raghunathan, Steinhardt, and
  Liang]{raghunathan2018semidefinite}
Raghunathan, A., Steinhardt, J., and Liang, P.~S.
\newblock Semidefinite relaxations for certifying robustness adversarial
  examples.
\newblock In \emph{Neural Information Processing Systems, {NeurIPS}}, 2018.

\bibitem[Rhee et~al.(2018)Rhee, Seo, and Kim]{rhee2018hybrid}
Rhee, S., Seo, S., and Kim, S.
\newblock Hybrid approach of relation network and localized graph convolutional
  filtering for breast cancer subtype classification.
\newblock In \emph{International Joint Conference on Artificial Intelligence,
  {IJCAI}}, 2018.

\bibitem[Sen et~al.(2008)Sen, Namata, Bilgic, Getoor, Gallagher, and
  Eliassi{-}Rad]{sen2008collective}
Sen, P., Namata, G., Bilgic, M., Getoor, L., Gallagher, B., and Eliassi{-}Rad,
  T.
\newblock Collective classification in network data.
\newblock \emph{{AI} Magazine}, 29\penalty0 (3), 2008.

\bibitem[Sherali \& Tuncbilek(1995)Sherali and
  Tuncbilek]{sherali1995reformulation}
Sherali, H.~D. and Tuncbilek, C.~H.
\newblock A reformulation-convexification approach for solving nonconvex
  quadratic programming problems.
\newblock \emph{Journal of Global Optimization}, 7\penalty0 (1), 1995.

\bibitem[Sokol et~al.(2012)Sokol, Avrachenkov, Gon{\c{c}}alves, and
  Mishenin]{sokol2012generalized}
Sokol, M., Avrachenkov, K., Gon{\c{c}}alves, P., and Mishenin, A.
\newblock Generalized optimization framework for graph-based semi-supervised
  learning.
\newblock In \emph{{SIAM} International Conference on Data Mining}, 2012.

\bibitem[Sun et~al.(2019)Sun, Guo, Zhu, and Lin]{sun2019virtual}
Sun, K., Guo, H., Zhu, Z., and Lin, Z.
\newblock Virtual adversarial training on graph convolutional networks in node
  classification.
\newblock \emph{arXiv preprint arXiv:1902.11045}, 2019.

\bibitem[Szegedy et~al.(2014)Szegedy, Zaremba, Sutskever, Bruna, Erhan,
  Goodfellow, and Fergus]{szegedy2014intriguing}
Szegedy, C., Zaremba, W., Sutskever, I., Bruna, J., Erhan, D., Goodfellow,
  I.~J., and Fergus, R.
\newblock Intriguing properties of neural networks.
\newblock In \emph{International Conference on Learning Representations,
  {ICLR}}, 2014.

\bibitem[Tang et~al.(2019)Tang, Li, Sun, Yao, Mitra, and Wang]{tang2019robust}
Tang, X., Li, Y., Sun, Y., Yao, H., Mitra, P., and Wang, S.
\newblock Robust graph neural network against poisoning attacks via transfer
  learning.
\newblock \emph{arXiv preprint arXiv:1908.07558}, 2019.

\bibitem[Vandenberghe \& Boyd(1996)Vandenberghe and
  Boyd]{vandenberghe1996semidefinite}
Vandenberghe, L. and Boyd, S.
\newblock Semidefinite programming.
\newblock \emph{SIAM review}, 38\penalty0 (1), 1996.

\bibitem[Wang et~al.(2019{\natexlab{a}})Wang, Wen, Wu, Huang, and
  Xion]{wang2019fdgars}
Wang, J., Wen, R., Wu, C., Huang, Y., and Xion, J.
\newblock Fdgars: Fraudster detection via graph convolutional networks in
  online app review system.
\newblock In \emph{Companion of The 2019 World Wide Web Conference, {WWW}},
  2019{\natexlab{a}}.

\bibitem[Wang et~al.(2019{\natexlab{b}})Wang, Chen, Ni, Yu, Li, Chen, and
  Yu]{wang2019adversarial}
Wang, S., Chen, Z., Ni, J., Yu, X., Li, Z., Chen, H., and Yu, P.~S.
\newblock Adversarial defense framework for graph neural network.
\newblock \emph{arXiv preprint arXiv:1905.03679}, 2019{\natexlab{b}}.

\bibitem[Wong \& Kolter(2018)Wong and Kolter]{wong2018provable}
Wong, E. and Kolter, J.~Z.
\newblock Provable defenses against adversarial examples via the convex outer
  adversarial polytope.
\newblock In \emph{International Conference on Machine Learning, {ICML}}, 2018.

\bibitem[Wu et~al.(2019{\natexlab{a}})Wu, Zhang, Souza~Jr, Fifty, Yu, and
  Weinberger]{wu2019simplifying}
Wu, F., Zhang, T., Souza~Jr, A. H.~d., Fifty, C., Yu, T., and Weinberger, K.~Q.
\newblock Simplifying graph convolutional networks.
\newblock In \emph{International Conference on Machine Learning, {ICML}},
  2019{\natexlab{a}}.

\bibitem[Wu et~al.(2019{\natexlab{b}})Wu, Wang, Tyshetskiy, Docherty, Lu, and
  Zhu]{wu2019adversarial}
Wu, H., Wang, C., Tyshetskiy, Y., Docherty, A., Lu, K., and Zhu, L.
\newblock Adversarial examples for graph data: Deep insights into attack and
  defense.
\newblock In \emph{International Joint Conference on Artificial Intelligence,
  {IJCAI}}, pp.\  4816--4823, 7 2019{\natexlab{b}}.

\bibitem[Xu et~al.()Xu, Chen, Liu, Chen, Weng, Hong, and Lin]{xu2019topology}
Xu, K., Chen, H., Liu, S., Chen, P., Weng, T., Hong, M., and Lin, X.
\newblock Topology attack and defense for graph neural networks: An
  optimization perspective.
\newblock In \emph{International Joint Conference on Artificial Intelligence,
  {IJCAI}}.

\bibitem[Ying et~al.(2018)Ying, He, Chen, Eksombatchai, Hamilton, and
  Leskovec]{ying2018graph}
Ying, R., He, R., Chen, K., Eksombatchai, P., Hamilton, W.~L., and Leskovec, J.
\newblock Graph convolutional neural networks for web-scale recommender
  systems.
\newblock In \emph{International Conference on Knowledge Discovery {\&} Data
  Mining, {KDD}}, 2018.

\bibitem[Zhang et~al.(2019{\natexlab{a}})Zhang, Khan, and
  Coates]{zhang2019comparing}
Zhang, Y., Khan, S., and Coates, M.
\newblock Comparing and detecting adversarial attacks for graph deep learning.
\newblock In \emph{Proc. Representation Learning on Graphs and Manifolds
  Workshop, Int. Conf. Learning Representations, New Orleans, LA, USA},
  2019{\natexlab{a}}.

\bibitem[Zhang et~al.(2019{\natexlab{b}})Zhang, Pal, Coates, and
  {\"{U}}stebay]{zhang2018bayesian}
Zhang, Y., Pal, S., Coates, M., and {\"{U}}stebay, D.
\newblock Bayesian graph convolutional neural networks for semi-supervised
  classification.
\newblock In \emph{{AAAI} Conference on Artificial Intelligence},
  2019{\natexlab{b}}.

\bibitem[Zhou \& Burges(2007)Zhou and Burges]{zhou2007spectral}
Zhou, D. and Burges, C. J.~C.
\newblock Spectral clustering and transductive learning with multiple views.
\newblock In \emph{International Conference on Machine Learning, {ICML}}, 2007.

\bibitem[Zhou et~al.(2003)Zhou, Bousquet, Lal, Weston, and
  Sch{\"{o}}lkopf]{zhou2003learning}
Zhou, D., Bousquet, O., Lal, T.~N., Weston, J., and Sch{\"{o}}lkopf, B.
\newblock Learning with local and global consistency.
\newblock In \emph{Neural Information Processing Systems, {NIPS}}, 2003.

\bibitem[Zhou et~al.(2005)Zhou, Huang, and Sch{\"{o}}lkopf]{zhou2005learning}
Zhou, D., Huang, J., and Sch{\"{o}}lkopf, B.
\newblock Learning from labeled and unlabeled data on a directed graph.
\newblock In \emph{International Conference on Machine Learning, {ICML}}, 2005.

\bibitem[Zhou et~al.(2019)Zhou, Michalak, and Vorobeychik]{zhou2019adversarial}
Zhou, K., Michalak, T.~P., and Vorobeychik, Y.
\newblock Adversarial robustness of similarity-based link prediction.
\newblock \emph{International Conference on Data Mining, {ICDM}}, 2019.

\bibitem[Zhu et~al.(2019)Zhu, Zhang, Cui, and Zhu]{zhu2019robust}
Zhu, D., Zhang, Z., Cui, P., and Zhu, W.
\newblock Robust graph convolutional networks against adversarial attacks.
\newblock In \emph{International Conference on Knowledge Discovery {\&} Data
  Mining, {KDD}}, 2019.

\bibitem[Z{\"{u}}gner \& G{\"{u}}nnemann(2019)Z{\"{u}}gner and
  G{\"{u}}nnemann]{zugner2019adversarial}
Z{\"{u}}gner, D. and G{\"{u}}nnemann, S.
\newblock Adversarial attacks on graph neural networks via meta learning.
\newblock In \emph{International Conference on Learning Representations,
  {ICLR}}, 2019.

\bibitem[Z{\"u}gner \& G{\"u}nnemann(2019)Z{\"u}gner and
  G{\"u}nnemann]{zugner2019certifiable}
Z{\"u}gner, D. and G{\"u}nnemann, S.
\newblock Certifiable robustness and robust training for graph convolutional
  networks.
\newblock In \emph{International Conference on Knowledge Discovery {\&} Data
  Mining, {KDD}}, 2019.

\bibitem[Z{\"{u}}gner et~al.(2018)Z{\"{u}}gner, Akbarnejad, and
  G{\"{u}}nnemann]{zugner2018adversarial}
Z{\"{u}}gner, D., Akbarnejad, A., and G{\"{u}}nnemann, S.
\newblock Adversarial attacks on neural networks for graph data.
\newblock In \emph{International Conference on Knowledge Discovery {\&} Data
  Mining, {KDD}}, 2018.

\end{thebibliography}

\section{Appendix}
\subsection{Certificates for Label Propagation and Feature Propagation}
\label{sec:label_prob}
Label propagation is a classic method for semi-supervised node classification, and there have been many variants proposed over the year \citep{zhou2003learning, zhou2005learning, zhou2007spectral}.
The general idea is to find a classification function $\mF$ such that the training nodes are predicted correctly and the predicted labels change smoothly over the graph. We can express this formally via the following optimization problem \citep{sokol2012generalized}:
\begin{equation}
	\min _{\mF}\bigg\{\sum_{i=1}^{N} \sum_{j=1}^{N} \mA_{i j}\left\|d_{i}^{\sigma-1} F_{i *}-d_{j}^{\sigma-1} F_{j *}\right\|^{2}+\mu \sum_{i=1}^{N} d_{i}^{2 \sigma-1}\left\|F_{i *}-H_{i *}\right\|^{2}\bigg\}
\end{equation}
where, $d_i$ is the node degree, $\mu$ is a regularization parameter trading off smoothness and predicting the labeled nodes correctly, $\sigma$ is a hyper-parameter, and $\mH$ is a matrix where the rows are one-hot vectors for the training nodes and zero vectors otherwise (i.e.\ $\mH_{vc} = 1$ if $\{v \in \gV_L ~\land~ \gt_v = c\}$ and $\mH_{vc} = 0$ otherwise). The resulting matrix $\mF \in \R^{N \times K}$ is the learned classification function, i.e.\ the value $\mF_{vc}$ gives us the (unnormalized) probability that node $v$ belongs to a class $c$, and we can make predictions by taking the argmax. 
The problem can be solved in closed form (even though in practice one would use power iteration) and the solution is:
$\mF=(1-\alpha)\left(\mI_N-\alpha \mD^{-\sigma} \mA \mD^{\sigma-1}\right)^{-1} \mH$ for $\alpha=2/(2+\mu)$. We can see that setting $\sigma=1$, i.e.\ the standard Laplacian variant \citep{zhou2007spectral} we obtain:
\begin{equation}
	\label{eq:lp_solution}
	\mF=(1-\alpha)\left(\mI_N-\alpha \mD^{-1} \mA \right)^{-1} \mH = \mathbf{\Pi}\mH
\end{equation}
From Eq.~\ref{eq:lp_solution} we have that Label Propagation is very similar to our $\ppr$-PPNP: instead of diffusing logits which come from a neural network it propagates the one-hot vectors of the labeled nodes instead.
From here onwards we apply our proposed method without any modifications by simply providing a different $\mH$ matrix in Problem \ref{prob:worst_margin}.

We can also certify the feature propagation (FP) approach of which there are several variants: e.g. the normalized Laplacian FP \citep{buchnik2018bootstrapped},
or a recently proposed equivalent model termed simple graph convolution (SGC) \citep{wu2019simplifying}.
Feature propagation is carried out in two steps: (i) the node features are diffused to incorporate the graph structure $\mX^{\text{diff}} = \mathbf{\Pi}\mX$, and (ii) a simple logistic regression model is trained using the diffused features $\mX^{\text{diff}}$ and subset of labelled nodes. Now, let the $\mW \in \R^{D \times K}$ be the weights corresponding to a trained logistic regression model. The predictions for all nodes are calculated as $\mY=\textrm{softmax}=(\mX^{\text{diff}}\mW)=\textrm{softmax}(\mathbf{\Pi}\mX\mW)=\textrm{softmax}(\mathbf{\Pi}\mH)$ with $\mH=\mX\mW$. Thus, again by simply providing a different matrix $\mH$ in Problem \ref{prob:worst_margin} we can certify feature propagation.

\subsection{Further experiments}
\label{sec:futher_experiments}
\begin{figure}[b!]
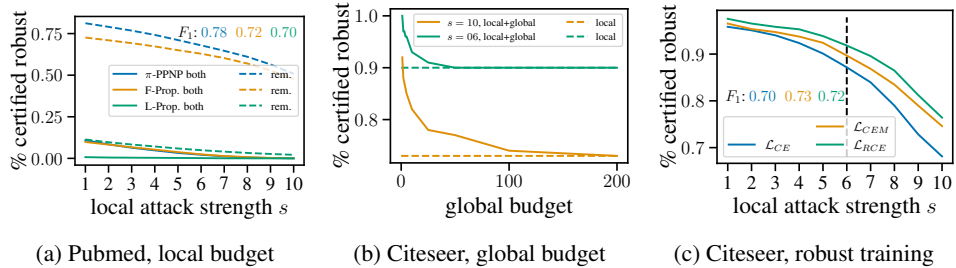

	\centering
	\begin{subfigure}
		[b]{0.3\textwidth}
		\resizebox {\textwidth} {!} {\input{figures/percent_robust_ce_lp_fp_pubmed.pgf}}
		\caption{Pubmed, local budget}
		\label{fig:pubmed_local}
	\end{subfigure}
	\begin{subfigure}
		[b]{0.305\textwidth}
		\resizebox {\textwidth} {!} {\input{figures/citeseer_percent_global.pgf}}
		\caption{Citeseer, global budget}
		\label{fig:cite_global}
	\end{subfigure}
	\begin{subfigure}
		[b]{0.30\textwidth}
		\resizebox {\textwidth} {!} {\input{figures/robust_training_citeseer_deg5.pgf}}
		\caption{Citeseer, robust training}
		\label{fig:cite_robust_5}
	\end{subfigure}
	\caption{(a,b) The local and global budget successfully restrict the attacker. Models are more robust to removing edges than both removing and adding edges.
	(c) Our robust training successfully increases the percentage of certifiably robust nodes.
	}
	\label{fig:further_experiments_1}
\end{figure}
\begin{figure}[t!]
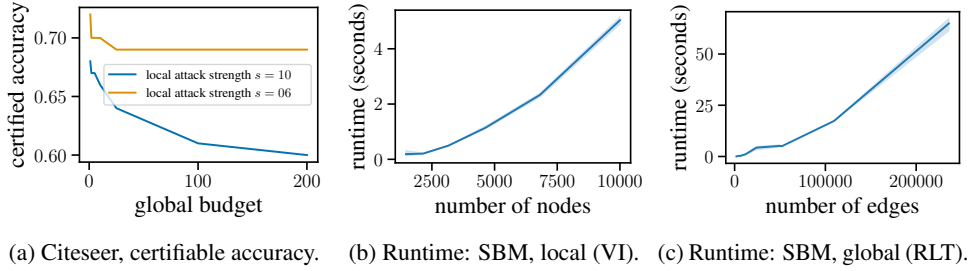

	\centering
	\begin{subfigure}
		[b]{0.315\textwidth}
		\resizebox {\textwidth} {!} {\input{figures/citeseer_lower_bound_accuracy.pgf}}
		\caption{Citeseer, certifiable accuracy.}
		\label{fig:lower_bound}
	\end{subfigure}
	\begin{subfigure}
		[b]{0.30\textwidth}
		\resizebox {\textwidth} {!} {\input{figures/runtime_local.pgf}}
		\caption{Runtime: SBM, local (VI).}
		\label{fig:runtime_local}
	\end{subfigure}
	\begin{subfigure}
		[b]{0.295\textwidth}
		\resizebox {\textwidth} {!} {\input{figures/runtime_rlt.pgf}}
		\caption{Runtime: SBM, global (RLT).}
		\label{fig:runtime_rlt}
	\end{subfigure}
	\caption{Further experiments on certifiable accuracy (a) and runtime (b-c).}
	\label{fig:further_experiments_2}
\end{figure}
In Fig.~\ref{fig:pubmed_local} we show the percent of certifiable robust nodes for different local budgets on the Pubmed graph ($N=19,717, |\gE|=44,324, D=500, K=3$) \citep{sen2008collective} demonstrating that our method scales to large graphs. Similar to before (Fig.~\ref{fig:ppnp_label_prop}), the models are more robust to attackers that can only remove edges. In Fig.~\ref{fig:cite_global} we analyze the robustness of Citeseer w.r.t. increasing global budget. The global budget constraints are again able to successfully restrict the attacker. The global budget makes a larger difference when the attacker has a larger local attack strength ($s=10$).
In Fig.~\ref{fig:cite_robust_5} we show that the robust training increases the percent of certifiably robust nodes. Comparing to Fig.~\ref{fig:cite_robust} we conclude that training with a larger local attack strength ($s=10$ as opposed to $s=6$) makes the model more robust overall while the predictive performance ($F_1$ score) is the same in both cases.

We also investigate certifiable accuracy. The ratio of nodes that are both certifiably robust and at the same time have a correct prediction is a lower bound on the overall worst-case classification accuracy since the worst-case perturbation can be different for each node. We plot this ratio in Fig.~\ref{fig:lower_bound} for Citeseer and see that the certifiable accuracy is relatively close to the clean accuracy when the budget is restrictive, and it decreases gracefully as we in increase the budget.

To show how the runtime scales with number of nodes and number of edges we randomly generate SBM graphs of increasing size, and we set all edges in the generated graphs as fragile ($\gF = \gE$). In Fig.~\ref{fig:runtime_local} we see 
the mean runtime across five runs for local budget (VI algorithm). Even for graphs with more than 10K nodes the certificate runs in a few seconds. Similarly, Fig.~\ref{fig:runtime_rlt} shows the runtime for global budget (RLT relaxation). We see that the runtime scales linearly with the number of edges. Furthermore, the overall runtime can be easily reduced by: (i) stopping early whenever the worst-case margin becomes negative, (ii) using Gurobi's distributed optimization capabilities to reduce solve times, and (iii) having a single shared preprocessing step for all nodes.

\subsection{Proofs}
\label{sec:app_proofs}
\begin{proof}[Proof. Proposition \ref{prop:policy_iter}]
Problem \ref{prob:pagerank} can be formulated as 
an average cost infinite horizon Markov decision problem, where at each node $v$ we decide which subset of $\gF_v$ edges are active, i.e.\ $\gA_v = \gP(\gF^v)$ where $\gP(\gF^v)$ is the power set of $\gF^v$ and the reward depends only on the starting state but not on the action and the ending state $r(v, a) = \vr_v, \forall a \in \gA_v$. From the average cost infinite horizon optimality criterion as shown by \citet{fercoq2013ergodic} we have:
\begin{equation}
		\label{eq:average_cost_mdp}
		\lim _{T \rightarrow \infty} \frac{1}{T} \mathbb{E}\big(\sum_{t=0}^{T-1} r\big(X_{t}, \nu_{t}\big)\big)=\lim _{T \rightarrow \infty} \frac{1}{T} \mathbb{E}\big(\sum_{t=0}^{T-1} r_{X_{t}, j} \overline{\nu}_{j}\big(X_{t}\big)\big)=\sum_{i, j \in[n]} \ppr_{i} \mP_{i, j} r_{i, j}
\end{equation}
where $X_t \in \gS$ is a random variable denoting the state of the system at the discrete time $t\ge0$, and $\nu(h_t)$ is deterministic control strategy determining a sequence of actions and is a function of the history $h_t=(X_0, \nu_0, 
\dots, X_{t-1}, \nu_{t-1}, X_t)$. For this problem there exists a stationary (feedback) strategy $\overline{\nu}(X_t)$ that does not depend on the history such that for all $t \ge 0, \nu_t(h_t) = \overline{\nu}(X_t)$. Eq.~\ref{eq:average_cost_mdp} follows from the ergodic theorem for Markov chains. Here the reward is more general and can be set depending on the edge $(i,j)$. Letting $r_{ij}=\vr_i, \forall j$ and plugging it in Eq.~\ref{eq:average_cost_mdp} we get that the optimality criterion equals $\vr^T\ppr $ since the transion matrix $\mP = \mD^{-1}\mA$ is row-stochastic.
As shown by \citet{hollanders2011policy} policy iteration is well suited to optimize PageRank and our Algorithm \ref{alg:policy_iter} corresponds to policy iteration with local budget for the above MDP. For a fixed damping factor $\alpha$ (which is our case) policy iteration always converges in less iterations than value iteration \citep{puterman1994markov} and does so in weakly polynomial time that depends on the number of fragile edges \citep{hollanders2011policy}.

\end{proof}	
\begin{proof}[Proof. Proposition \ref{prop:qclp}]
	Eqs.~\ref{eq:qclp_pagerank} and \ref{eq:qclp_aux} correspond to the LP of the unconstrained MDP on the auxiliary graph. Intuitively, the variable $x_v$ maps to the PageRank score of node $v$, and from the variables $\xof_{ij}/\xon_{ij}$ we can recover the optimal policy: if the variable $\xof_{ij}$ (respectively $\xon_{ij}$) is non-zero then in the optimal policy the fragile edge $(i,j)$ is turned off (respectively on). Since there exists a deterministic optimal policy, only one of them is non-zero but never both.
	Eq.~\ref{eq:qclp_local} corresponds to the local budget. Remarkably, despite the variables  $\xof_{ij}/\xon_{ij}$ not being integral, since they share the factor $\frac{x_i}{d_i}$ from Eq.~\ref{eq:qclp_aux} we can exactly count the number of edges that are turned off or on using only linear constraints. Eqs.~\ref{eq:qclp_quadratic} and \ref{eq:qclp_global} enforce the global budget. From Eq.~\ref{eq:qclp_quadratic} we have that whenever $\xof_{ij}$ is nonzero it follows that $\bon_{ij}=0$ and $\bof_{ij}=1$ since that is the only configuration that satisfies the constraints (similarly for $\xon_{ij}$). Intuitively, this effectively makes the $\bof_{ij}/\bon_{ij}$ variables "counters" and thus, we can utilize them in Eq. \ref{eq:qclp_global} to enforce the total number of perturbed edges to not exceed $B$.

	We also have to show that solving the MDP on the auxiliary graph solves the same problem as the MDP on the original graph. Recall that whenever we traverse any edge from node $i$ we obtain reward $\vr_i$. On the other hand, whenever we traverse an edge from the auxiliary node $v_{ij}$ corresponding to a fragile edge $(i,j)$ to the node $i$ (action "off") we get negative reward $-\vr_i$, and the transition probability is $1$. Intuitively, traversing back and forth between node $i$ and node $v_{ij}$ does not change the overall reward obtained (since $\vr_i$ and $-\vr_i$ cancel out). That is, we have the same reward as in the original graph with the edge $(i,j)$ excluded. Similarly, when we traverse the edge from auxiliary node $v_{ij}$ to the node $j$ (action "on") we obtain $0$ reward, i.e.\ no additional reward is gained and the transition happens with probability $\alpha$. Therefore, the overall reward is the same as if the fragile edge $(i,j)$ would be present in the original graph.
	
	More formally, for any given arbitrary policy for the unconstrained MDP on the auxiliary graph, let $k_v$ be the current number of "off" fragile edges for node $v$ and let $\gF^v_+$ be the current set of "on" fragile edges. From Eqs.\ref{eq:qclp_pagerank} and Eqs.\ref{eq:qclp_aux} we have:
	\begin{subequations}
	\begin{align}
	x_v - \alpha \sum_{(i, v) \in \gE_f \cup \gF^v_+} x_id_i^{-1} - k_vd_v^{-1} = (1-\alpha)\vz_v 
	\\
	\label{eq:proof_aux}
	x_v = \alpha \sum_{(i, v) \in \gE_f \cup \gF^v_+} x_id_i^{-1} + (1-\alpha)\vz_v - k_vd_v^{-1}
	\implies
	x_v = \ppr(\vz_v)_v - k_vd_v^{-1}  
	\end{align}
	\end{subequations}
	where we can see that $\ppr(\vz_v)_v$ is the personalized PageRank  for node $v$ for a perturbed original graph corresponding to the current policy, i.e. the graph where all $(v,j) \in \gF^v_+$ for all $v \in \gV$ are turned "on".
	Plugging in Eq.~\ref{eq:proof_aux} into the objective from Eq.~\ref{eq:obj_value} we have 
	$$
	\max \sum\nolimits_{v\in\gV} x_v\vr_v - \sum\nolimits_{(i, j) \in \gF}\xof_{ij}\vr_i= \max \sum\nolimits_{v\in\gV} \ppr(\vz_v)\vr_v 
	$$
	which exactly corresponds to the objective of Problem \ref{prob:pagerank}.
	Since the above analysis holds for any policy it also holds for the optimal policy, and therefore solving the unconstrained MDP on the auxiliary graph is equivalent to solving the unconstrained MDP on the original graph.
	
	Combining everything together we have that solving the QCLP is equivalent to solving Problem \ref{prob:pagerank}. 

\end{proof}
\begin{proof}[Proof. Proposition \ref{prop:rlt}]
	Using the reformulation-linearization technique (RLT) we relax the quadratic constraints in Eq.~\ref{eq:qclp_quadratic}. In general, from RLT it follows that we add the following four linear constraints for each pairwise quadratic constraint $m_i m_j = M_{ij}$
	\begin{subequations}
		\label{eq:rlt_four_linear}
		\begin{align}
		\label{eq:rlt_four_linear_a}
		{M_{i j}-\underline{m}_{i} m_{j}-\underline{m}_{j} m_{i} \geq-\underline{m}_{i} \underline{m}_{j}} \\
		\label{eq:rlt_four_linear_b}
		{M_{i j}-\underline{m}_{j} m_{i}-\overline{m}_{i} m_{j} \leq-\underline{m}_{j} \overline{m}_{i}} \\
		\label{eq:rlt_four_linear_c}
		{M_{i j}-\underline{m}_{i} m_{j}-\overline{m}_{j} m_{i} \leq-\underline{m}_{i} \overline{m}_{j}} \\
		\label{eq:rlt_four_linear_d}	
		{M_{i j}-\overline{m}_{i} m_{j}-\overline{m}_{j} m_{i} \geq-\overline{m}_{i} \overline{m}_{j}} 
		\end{align}
	\end{subequations}
	where $\underline{m}_i\le m_i \le \overline{m}_i$ are lower and upper bounds for $m_i$.
	
	From Eq.~\ref{eq:qclp_quadratic} we see that our quadratic terms always equal to $0$ ($M_{ij}=0$), and we have the following upper $\overline{\bof_{ij}}=\overline{\bon_{ij}}=1$, and $\overline{\xon_{ij}}=\overline{\xof_{ij}}=\frac{\overline{x_i}}{d_i}>0$, and lower bounds $\underline{\bof_{ij}}=\underline{\bon_{ij}}=\underline{\xon_{ij}}=\underline{\xof_{ij}}=0$.
	Plugging these upper/lower bounds into Eq.~\ref{eq:rlt_four_linear} for our quadratic terms  $\xof_{ij}\bon_{ij}=0$ and $\xon_{ij}\bof_{ij}=0$
	we see that the constraints arising from Eqs.~\ref{eq:rlt_four_linear_a}, \ref{eq:rlt_four_linear_b} and \ref{eq:rlt_four_linear_c} are always trivially fulfilled. 
	Thus we are left with the constraints arising from Eq.~\ref{eq:rlt_four_linear_d} which for our problem are:
	\begin{align}
		\xof_{ij} + \overline{\xof_{ij}} \bon_{ij}  \le \overline{\xof_{ij} } \qquad\text{and}\qquad 
		\xon_{ij}  + \overline{\xon_{ij} } \bof_{ij}  \le \overline{\xon_{ij}} 
	\end{align}
    
    There are two cases to consider: 
	\begin{itemize}
		\item[Case 1:]The edge is turned "off". We have $\xof_{ij}  = x_id_i^{-1}$ and $\xon_{ij}  = 0$.
		\begin{align*}
		\xof_{ij}  + \overline{\xof_{ij}} \bon_{ij} \le \overline{\xof_{ij}} \implies
		\xof_{ij}(\overline{\xof_{ij}})^{-1} + \bon_{ij} \le 1 \implies \\
		\implies \xof_{ij}(\overline{\xof_{ij}})^{-1} \le \bof_{ij} \implies
		\xof_{ij}(\overline{x_i}d_i^{app_proofs-1})^{-1} \le \bof_{ij}
		\end{align*}
		And trivially:
		$\xon_{ij} + \overline{\xon_{ij}} \bof_{ij} \le \overline{\xon_{ij}} \implies
		\overline{\xon_{ij}} \bof_{ij} \le \overline{\xon_{ij}} 
		\implies
		\bof_{ij} \le 1
		$.
		\item[Case 2:]The edge is turned "on". We have $\xon_{ij} = x_id_i^{-1}$ and $\xof_{ij} = 0$.
		\begin{align*}
		\xon_{ij} + \overline{\xon_{ij}} \bof_{ij} \le \overline{\xon_{ij}} \implies
		\xon_{ij}(\overline{\xon_{ij}})^{-1} + \bof_{ij} \le 1 \implies
		\xon_{ij}(\overline{x_i}d_i^{-1})^{-1} \le \bon_{ij} 
		\end{align*}
		And trivially:
		$
		\overline{\xof_{ij}} + \overline{\xof_{ij}} \bon_{ij} \le \overline{\xof_{ij}}
		\implies
		\overline{\xof_{ij}} \bon_{ij} \le \overline{\xof_{ij}}
		\implies
		\bon_{ij} \le 1
		$
	\end{itemize}
	The above two cases are disjoint and we can plug $\bof_{ij}$ and $\bon_{ij}$ into Eq.~\ref{eq:qclp_global} to obtain Eq.\ref{eq:rlt}. 
\end{proof}

\subsection{SDP relaxation}
\label{app:sdp}
In this section we show that the SDP-relaxation \citep{vandenberghe1996semidefinite} based on semidefinite programming is not suitable for our problem since the constraints are trivially fulfilled.
For convinience, we rename the variables that participate in the quadratic constraints ($\beta_{ij}^0, x_{ij}^0, \dots)$ to $(y_1, y_2, \dots)$. The SDP relaxation replaces the product terms $y_iy_j$ (e.g. $\xof_{ij}\bon_{ij}$) by an element $\mY_{ij}$ of an $n \times n$
matrix $\mY$ and adds the constraint $\mY-\vy\vy^T \succeq 0$, where $\vy$ is the vector of variables. Since in the original $QCLP$ there are no terms of the form $y_iy_i$ corresponding to the elements on the diagonal, we can make the diagonal elements $\mY_{ii}$ arbitrarily high to make the matrix $\mY-\vy\vy^T$ positive semidefinite and trivially satisfy the constraint.

\subsection{Hardness of PageRank optimization with global budget}
\label{app:np_hard}
The Link Building problem \cite{olsen2010maximizing,olsen2010constant} aims at maximizing the PageRank of a single given node $v$ by selecting a set of $k$ optimal edges that point to node $v$. We will use the fact that the Link Building problem is a special case of Problem \ref{prob:pagerank} to derive our hardness result.
\begin{problem}[Link Building \citep{olsen2010maximizing}]
	\label{prob:link_building}
	Given a graph $G=(\gV, \gE)$, node $v \in \gV$, budget $k \in \sZ$, and any fixed $\alpha \in (0, 1)$. Find a set $\gS \subseteq \gV \setminus \{v\}$ with $|S|=k$ maximizing $\ppr_{\att{G}, \alpha}(\ve/n)_v$ in the perturbed graph $\att{G}=(\gV, \att\gE := \gE_f \cup (\gS \times \{v\}))$,
	where $\ve/n$ is the teleport vector for the uniform distribution.
\end{problem}
\begin{proposition}
	Problem \ref{prob:pagerank} with global budget is W[1]-hard and allows no FPTAS.
\end{proposition}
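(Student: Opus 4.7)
The plan is to reduce Link Building (Problem \ref{prob:link_building}) to Problem \ref{prob:pagerank} with global budget, so that the W[1]-hardness and the non-existence of an FPTAS transfer. Given an instance $(G=(\gV,\gE), v, k, \alpha)$ of Link Building, I would construct a Problem \ref{prob:pagerank} instance on the \emph{same} graph $G$ with the following parameters: fixed edges $\gE_f = \gE$, fragile set $\gF = \{(u,v) : u\in\gV\setminus\{v\},\, (u,v)\notin\gE\}$, teleport vector $\vz = \ve/n$, reward $\vr = \ve_v$, global budget $B = k$, and local budgets $b_u = |\gF^u|$ (so local constraints are vacuous). Then the objective $\vr^T\ppr_{\att G,\alpha}(\vz) = \ppr_{\att G,\alpha}(\ve/n)_v$ of Problem \ref{prob:pagerank} coincides exactly with the Link Building objective on the perturbed graph $\att G$.

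The next step is to verify that the admissible sets match. Any admissible $\att G \in \adm$ in the constructed instance is obtained from $G$ by turning on some $\gF_+ \subseteq \gF$ of size at most $B=k$. Because $\gF_+$ is precisely a set of new in-edges to $v$, each such $\att G$ corresponds bijectively to a candidate set $\gS \subseteq \gV\setminus\{v\}$ of size $\le k$ in Link Building (with $\gS = \{u : (u,v)\in\gF_+\}$). To bridge the "at most $k$" versus "exactly $k$" discrepancy, I would invoke the standard monotonicity argument for in-links: adding an extra in-edge to $v$ can only increase its PageRank in this construction (a result used e.g.\ by Olsen); hence the optimum of Problem \ref{prob:pagerank} is attained at some $\att G$ with exactly $k$ added edges, matching the exact-$k$ formulation of Link Building.

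From here the hardness results transfer directly. The reduction is polynomial-time computable, its size grows polynomially in $|\gV|$, and crucially the parameter is preserved: the global budget $B$ equals the Link Building parameter $k$. Thus a fixed-parameter tractable algorithm for Problem \ref{prob:pagerank} in $B$ would give an FPT algorithm for Link Building in $k$, contradicting the W[1]-hardness of the latter; this yields W[1]-hardness of Problem \ref{prob:pagerank}. Likewise, an FPTAS for Problem \ref{prob:pagerank} would immediately yield an FPTAS for Link Building by running it on the constructed instance, contradicting the non-approximability result cited from \citet{olsen2010maximizing}.

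The main obstacle I expect is justifying cleanly the bridge between "at most $B$" and "exactly $k$"; one has to either rely on a monotonicity property of adding in-edges to $v$ under personalized PageRank with uniform teleport (where adding in-links to a node can only increase its PageRank score when teleport mass is spread uniformly), or handle the degenerate case where some optimal solution leaves slack in the budget. A secondary but minor issue is making sure the preprocessing (removing any pre-existing $(u,v)$ edges from $\gE$ to populate $\gF$ correctly) does not alter the Link Building instance; this can be addressed by a standard WLOG reduction on Link Building where the edge $(u,v)$ is already counted, and if present is simply not a candidate addition.
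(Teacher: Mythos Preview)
Your proposal is correct and takes essentially the same route as the paper: reduce Link Building to Problem~\ref{prob:pagerank} by setting $\vz=\ve/n$, $\vr=\ve_v$, $\gE_f=\gE$, fragile edges equal to the candidate in-links to $v$, and $B=k$, then invoke monotonicity of in-links to handle the $\le k$ versus $=k$ mismatch and parameter preservation to transfer W[1]-hardness and the non-existence of an FPTAS. The only cosmetic differences are that the paper takes $\gF=(\gV\setminus\{v\})\times\{v\}$ without explicitly excluding pre-existing edges and sets $b_u=1$ rather than $b_u=|\gF^u|$; since each node has at most one fragile out-edge these choices are equivalent, and your version is arguably cleaner with respect to the disjointness requirement $\gF\subseteq(\gV\times\gV)\setminus\gE_f$.
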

\begin{proof}
	Setting the teleport vector to the uniform distribution $\vz = \ve / n$, the reward vector to $\vr = \ve_v$, the set of fragile edges to $\gF = (\gV \setminus \{v\}) \times \{v\}$, the set of fixed edges to $\gE_f = \gE$, and configuring the budgets as $b_v = 1, \forall v$ and $B = k$ we see that the Problem \ref{prob:link_building} is a special case of Problem \ref{prob:pagerank}. Note that, since we can always increase $\ppr_v$ by adding edges pointing to $v$, the $x\le B$ global constraint is equivalent to the $x=B$ constraint where $x$ is the expression on the left-hand side in Eq.~\ref{eq:qclp_global}.
	
	\citet{olsen2010maximizing} shows that the Link Building problem is W[1]-hard and admits no FPTAS by reducing it to the Regular Independent Set problem which is W[1]-complete \cite{cai2008parameterized}. Therefore, Problem \ref{prob:pagerank} with global budget is also W[1]-hard and allows no FPTAS since $k$ is preserved in the reduction.
\end{proof}

\subsection{Alternative upper bound}
\label{app:bound}
As an alternative upper bound for $x_v$ we can use the following approach:
Assume we have given a fixed set of edges $\gE_f $ where every node has at least one fixed edge. From Proposition \ref{prop:qclp} we have $x_v=(1-k_vd_v^{-1})^{-1}\ppr_v$. To maximize this value, we can simply set $\ppr(\vz)_v=1$ (since this is the maximal PageRank score achievable) and $k_v=|\gF^v|$. Since every node has at least one fixed edge, we have $d_v>k_v$, i.e.\ the inverse is always defined.

\subsection{Further experimental details}
\label{sec:exp_details}
We preprocess each graph and keep only the nodes that belong to largest connected component. The resulting graph for Cora-ML has $N=2,810, |\gE|=10,138$, for Citeseer $N=2,110, |\gE|=7,336$ and for Pubmed $N=19,717, |\gE|=88,648$. Unless otherwise specified we set $\alpha=0.85$.
We compute the certificates with respect to the predicted class label, i.e. we set $\gt_t$ in $m^*_{\gt_t, *}(t)$ to the predicted class for node $t$ using the clean graph.
Experiments are run on Nvidia 1080Ti GPUs using CUDA and TensorFlow and on Intel CPUs. We use the GUROBI solver to solve the linear programs.

We configure our $\ppr$-PPNP model with one hidden layer and choose a latent dimensionality of 64. We randomly select 20 nodes per class for the training/validation set, and use the rest for the testing. The weights $\theta$ are regularized with the $L_2$ norm with strength of $5e-2$. We train for a maximum of $10,000$ epoch with a fixed learning rate of $1e-2$ and patience of 100 epochs for early stopping. We train the model for five different random splits and report the averaged results. 

When reporting results for local budget (e.g.\ Figs.~\ref{fig:local}, \ref{fig:cite_robust}, \ref{fig:pubmed_local}, \ref{fig:cite_robust_5}) we evaluate the certifiable robustness for all test nodes, since as we discussed in Sec.~\ref{sec:opt_pagerank} we only need to run Algorithm \ref{alg:policy_iter} $K \times K$ times to obtain certificates for all nodes. When reporting results for global budget (e.g.\ Figs.~\ref{fig:global} and ~\ref{fig:cite_global}) we randomly select $150$ test nodes for which we compute the certificate. For all results regarding runtime (e.g. Figs.~\ref{fig:efficiency}, \ref{fig:runtime_local}, \ref{fig:runtime_rlt}) we report average time across five runs on a machine with 20 CPU cores. 

\end{document}